\title{\LARGE \bf
Free-configuration Biased Sampling for Motion Planning:\\ Errata}
\author{Joshua Bialkowski,  Michael Otte, and Emilio Frazzoli}
\theoremstyle{definition}
\newtheorem{theorem}{Theorem}
\newtheorem{lemma}[theorem]{Lemma}
\newtheorem{corollary}[theorem]{Corollary}
\newtheorem{proposition}[theorem]{Proposition}
\theoremstyle{definition}
\newcommand{\removelatexerror}{\let\@latex@error\@gobble}
\newlength{\tightalgowidth}
\newlength{\tightalgoremainder}
\newenvironment{tightalgo}[2][]
{ 
   \setlength{\tightalgowidth}{#2}
   \setlength{\tightalgoremainder}{\linewidth-\tightalgowidth}
   \begin{algorithm}[#1]
   \small }
{ \end{algorithm} }
\patchcmd{\@algocf@start}{%
  \begin{lrbox}{\algocf@algobox}%
}{%
  \rule{0.5\tightalgoremainder}{\z@}%
  \begin{lrbox}{\algocf@algobox}%
  \begin{minipage}{\tightalgowidth}%
}{}{}
\patchcmd{\@algocf@finish}{%
  \end{lrbox}%
}{%
  \end{minipage}%
  \end{lrbox}%
}{}{}
\mathchardef\mhyphen="2D
\mathchardef\mh="2D
\newcommand{ \Fn }   [2]{  #1 \left( #2 \right) }
\newcommand{ \Proc  }[1]{ \mathtt{ #1 } }
\newcommand{ \Graph           }{ G }
\newcommand{ \EdgeSet         }{ E }
\newcommand{ \VertexSet       }{ V }
\newcommand{ \Configuration      }{  \Point }
\newcommand{ \SampledConfiguration }{  \Configuration_\mathrm{sample} }
\newcommand{ \NearVertices         }{  \VertexSet_\mathrm{near}       }
\newcommand{ \SetOf }[1]{ \left\{ #1 \right\} }
\newcommand{ \DefinedAs }{ \overset{\triangle}{=} }
\newcommand{ \SuchThat } { \, | \, }
\newcommand{\thereExists}{ \, \exists \, }
\newcommand{ \Node       }{ v }
\newcommand{ \NodeSet    }{ \mathcal{V} }
\newcommand{ \NumSampFn  }[2]{ N_{ #2 } \left( #1 \right) }
\newcommand{ \Point        }{ x }
\newcommand{ \SampledPoint }{ X }
\newcommand{ \QueryPoint   }{ \Point_q }
\newcommand{ \PointSet     }{ \mathcal{X} } 
\newcommand{ \HyperRect    }{ H }
\newcommand{\PointCoord}[2]{#1[#2]}
\newcommand{ \NumSamples }{ T                   }
\newcommand{ \NumFree    }{ F                   }
\newcommand{ \Parent     }{ P                   }
\newcommand{ \RealSet    }{ \mathbb{R}          }
\newcommand{ \Dimension  }{ d                   }
\newcommand{ \deep       }{ D                   }
\newcommand{ \Generate        }{ \mathtt{GenerateSample}    }
\newcommand{ \NodeParent  }{P}
\newcommand{ \NodeNumTotal }{T}
\newcommand{ \NodeNumFree  }{F}
\newcommand{ \NodeMeasure }{M}
\newcommand{ \NodeParentFn   }[1]{ {#1}.\NodeParent }
\newcommand{ \NodeNumTotalFn }[1]{ {#1}.\NodeNumTotal }
\newcommand{ \NodeNumFreeFn  }[1]{ {#1}.\NodeNumFree }
\newcommand{ \NodeMeasureFn  }[1]{ {#1}.\NodeMeasure }
\newcommand{\Union}     {\cup}
\newcommand{\UnionOver}[1]{\bigcup_{#1}}
\newcommand{\Intersect}{\cap}
\newcommand{\free}{\mathrm{F}}
\newcommand{\obs}{\mathrm{O}}
\newcommand{\obsSpace}{S_{\obs}}
\newcommand{\freeSpace}{S_{\free}}
\newcommand{\totalSpace}{S}
\newcommand{\probDensity}[2]{f_#1\left(#2\right)}
\newcommand{\probUniformFree}[1]{f_{\free}\left(#1\right)}
\newcommand{\probability}[1]{\mathbb{P}\left(#1\right)}
\newcommand{\node}{\Node}
\newcommand{\nodeset}{\NodeSet}
\newcommand{\spaceOf}[1]{\mathbb{S}\left(#1\right)}
\newcommand{\closedSpaceOf}[1]{ \overline{ \mathbb{S} }\left(#1\right)}
\newcommand{\obsNode}{\node^{\obs}}
\newcommand{\freeNodeSet}{\nodeset^{\free}}
\newcommand{\obsNodeSet}{\nodeset^{\obs}}
\newcommand{\mixedNodeSet}{\nodeset^{\mathrm{M}}}
\newcommand{\leafNode}{\node_{\mathrm{L}}}
\newcommand{\leafNodeSet}{\nodeset^{\mathrm{L}}}
\newcommand{\mixedLeafNodeSet}{\nodeset^{\mathrm{ML}}}
\newcommand{\obsLeafNodeSet}{\nodeset^{\mathrm{OL}}}
\newcommand{\freeLeafNodeSet}{\nodeset^{\mathrm{FL}}}
\newcommand{\measure}[1]{\mathscr{L}\left(#1\right)}
\newcommand{ \ChildParticular }{ c }
\newcommand{ \ChildSet }{ C }
\newcommand{\distMass}{\NodeMeasure}
\newcommand{\freeDescendents}{\hat{C}_n}
\newcommand{\epsBoxPoint}{\Xi_{\epsilon,\Point}}
\begin{document}
%\institute{Laboratory for Information and Decision Systems \at Massachusetts Inst. of Technology, Cambridge, MA, \\ email: {\tt \{jbialk, sertac, ottemw, frazzoli\}@mit.edu}}

%
% Use the package "url.sty" to avoid
% problems with special characters
% used in your e-mail or web address
% 
\maketitle

\thispagestyle{empty}
\pagestyle{empty}

\begin{abstract} 
This document contains improved and updated proofs of convergence for the sampling method presented in our paper ``Free-configuration Biased Sampling for Motion Planning'' \cite{bialkowski_IROS13}\\

\noindent The following is the abstract of the original paper:\\

In sampling-based motion planning algorithms the initial step at
every iteration is to generate a new sample from the obstacle-free portion of 
the configuration space. This is usually accomplished via rejection sampling, 
i.e., repeatedly drawing points from the entire space until an obstacle-free 
point is found. This strategy is rarely questioned because the extra 
work associated with sampling (and then rejecting) useless points contributes at 
most a constant factor to the planning algorithm's asymptotic runtime complexity. 
However, this constant factor can be quite large in practice.
We propose an alternative approach that enables sampling from a 
distribution that provably converges to a uniform distribution over {\it only} the obstacle-free space. Our method works by storing empirically observed estimates of obstacle-free space in a point-proximity data structure, and then using this 
information to generate future samples. Both theoretical and experimental 
results validate our approach.

\end{abstract}

\section{Introduction}

This document contains an improved and updated version of the proofs of convergence that appeared in our paper titled ``Free-configuration Biased Sampling for Motion Planning'' \cite{bialkowski_IROS13} as part of the proceeding of the 2013 IEEE/RSJ International Conference on Intelligent Robots and Systems (IROS). We have included only the material from the original paper that is necessary to understand the updated proofs. Therefore, we encourage the reader to also read the original paper \cite{bialkowski_IROS13}; it contains sections on motivation, related work, experiments, results, and runtime analysis---all of which do not appear in the current document.

\section{Algorithm} \label{sec:algorithm}

\begin{figure}
\centering
Sampling Distribution Induced by Our Algorithm in 2D

\vspace{.2cm} %% NOTE (at least on my latex) vspace does not work if 
              %% it is not seperated parenthetically from what is above/below

\begin{minipage}{0.24\linewidth} 
   \centering
   \includegraphics[width=\linewidth]{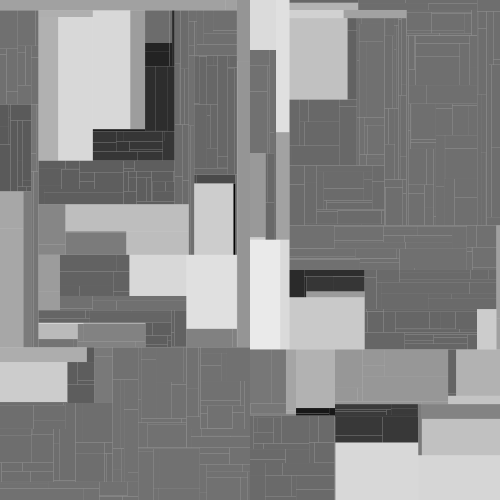}
   (a)
\end{minipage}
\begin{minipage}{0.24\linewidth}
   \centering
   \includegraphics[width=\linewidth]{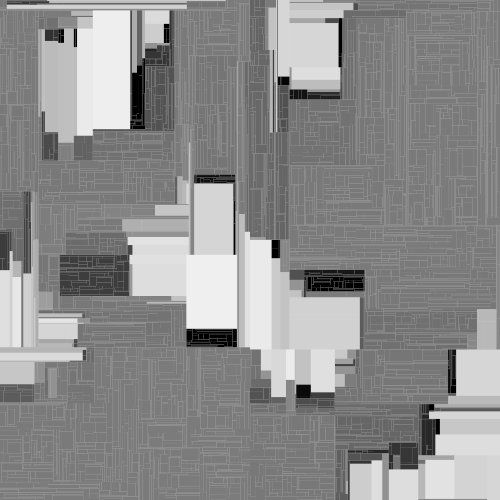}
   (b)
\end{minipage}
\begin{minipage}{0.24\linewidth}
   \centering
   \includegraphics[width=\linewidth]{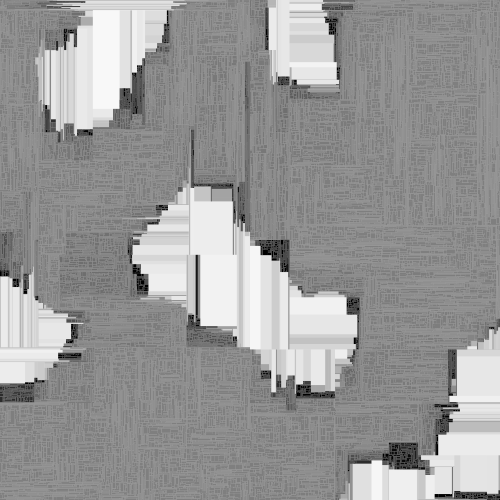}
   (c)
\end{minipage}
\begin{minipage}{0.24\linewidth}
   \centering
   \includegraphics[width=\linewidth]{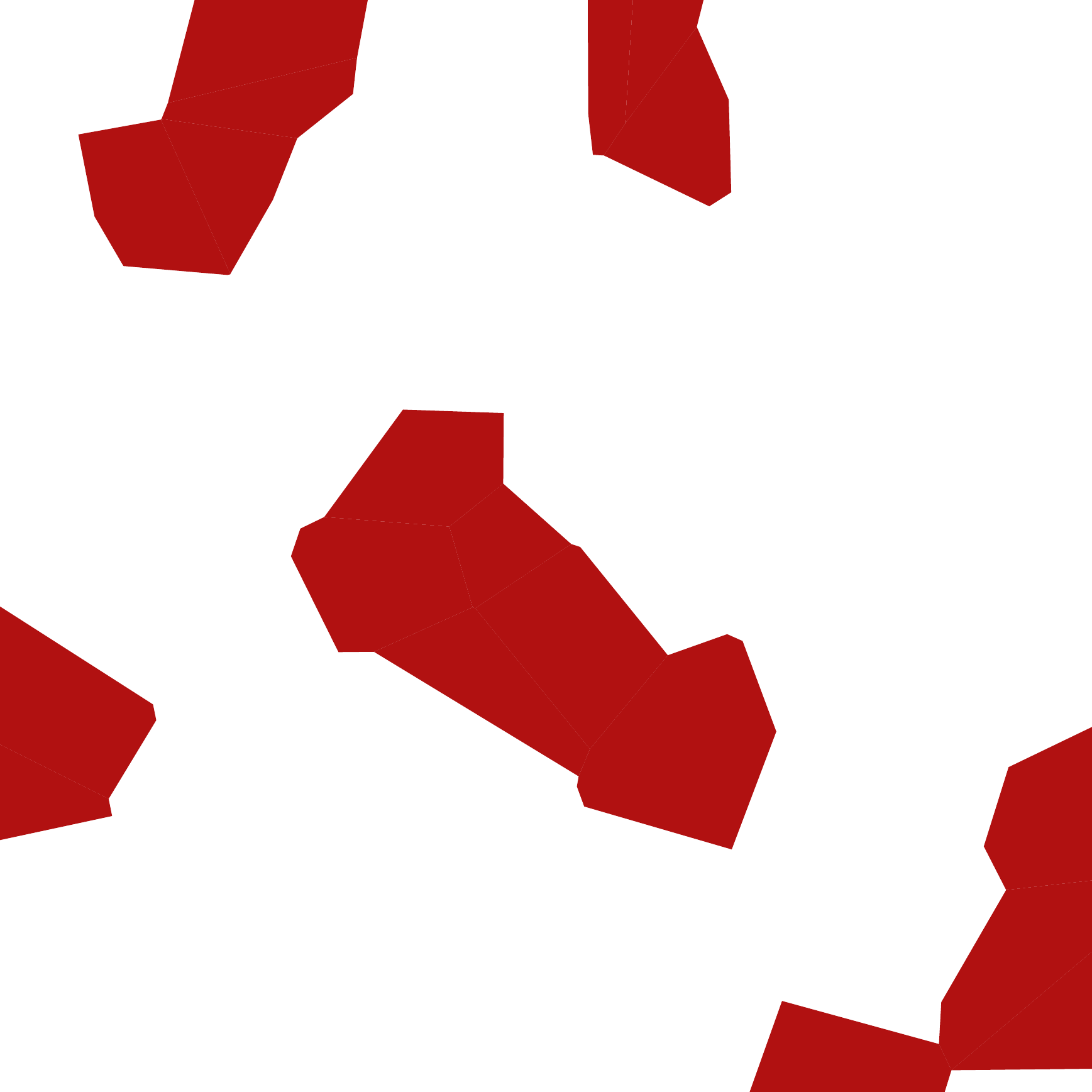}
   (d)
\end{minipage}
\caption{%
The induced sampling distribution of an augmented kd-tree after 
$10^4$, $10^5$, and $10^6$ samples are shown in (a), (b), and (c),
 respectively.
White-black represent low-high sampling probability density. 
The actual obstacle configuration appears in (d), obstacles are red.}
\label{fig:kdDemo}
\end{figure}

Our sampling algorithm is intended for use with sampling based motion planning.
Let $\Graph = \left( \VertexSet, \EdgeSet \right)$
denote the output of the motion planning algorithm, where $\Graph$ is a graph defined by its vertex set $\VertexSet$ and edge set $ \EdgeSet$.
At each iteration,
a new sample $\SampledConfiguration$ is generated from the obstacle-free 
space. A set of candidate nodes 
$\NearVertices \subset \VertexSet$ is selected  for possible
connection with $\SampledConfiguration$. That is, for each 
$\Configuration \in \NearVertices$, if a local planner determines a 
collision-free path exists from $\Configuration$ to $
\SampledConfiguration$, then $\SampledConfiguration$ is 
added to $\VertexSet$ and $[\Configuration , \SampledConfiguration]$ 
is added to $\EdgeSet$.

Our method relies on storing extra data in each node of a kd-tree. 
%
%The rest of this paragraph contains a brief overview of kd-trees and defines some of our notation regarding them.
%
A kd-tree is a special type of binary search tree that can efficiently determine 
the nearest neighbor(s) of a query point $\QueryPoint$ 
within a previously defined finite set of points ${\PointSet \subset \RealSet^\Dimension}$ 
\cite{Bentley.acm75}.
Each node in the kd-tree is a {\tt Node} data structure, 
the fields of which are summarized in table \ref{table:NodeStruct}. 
%so we will use the {\tt Node} data structure to describe both. 
%
%In Algorithm~\ref{algo:sample} we use dot notation $A.B$ to indicate that 
%$B$ is a field of an instance $A$ of a data structure.

Each node $\Node$ in a kd-tree defines an axis-aligned hyper-rectangle 
$\HyperRect(\Node) \subset \RealSet^\Dimension$. 
%It forms the root of a subtree where  all descendant nodes are in $\HyperRect(v)$. 
An interior node $v$ is 
assigned a point $\Point \in \PointSet \cap H(v)$ and an index $j \in \SetOf{1 \ldots d}$. 
Its two children are the hyper-rectangles found by splitting $\HyperRect(\Node)$ with 
a hyperplane passing through $\Point$ and orthogonal to the $j$-th axis.
Leaf nodes are the same as interior nodes except that they are not assigned a  
point and have no children (yet). 
Finally, for any $\HyperRect \subset \mathbb{R}^d$, $\Fn{\Proc{Measure}}{\Node}$ returns the measure of the set  $\HyperRect(\Node)$, and $\Fn{\Proc{SampleUniform}}{ \HyperRect }$ returns a point drawn from a uniform distribution over $\HyperRect$.

\begin{table}
   \centering
   \caption{ The {\tt Node} data structure }
   \begin{tabular}{l|l|p{1.9in}}
      \textbf{field}
         & \textbf{type}
         & \textbf{description}  \\
      \hline
      \hline
      $\Point$
         & vector $\in \RealSet^\Dimension$
         & point associated with this node \\
      $j$
         & integer $\in \SetOf{1..\Dimension}$
         & index of the split plane \\
      $c[2]$
         & {\tt Node} array
         & references to the children of the node (two in the case of a kd-%
           tree), or a \emph{null} reference $\emptyset$ if this is a leaf node\\
      $\Parent$
        & {\tt Node}
        & reference to the parent node \\
      $\NumSamples$
         & float $\in \RealSet$
         & weighted number of samples generated from $\HyperRect$ \\
      $\NumFree$
         & float $\in \RealSet$
         & weighted number of collision free samples generated from $\HyperRect$ \\
      $\distMass$
         & float $\in \RealSet$
         & estimated measure of free space in $\HyperRect$ 
   \end{tabular}
   \label{table:NodeStruct}
\end{table}

\begin{tightalgo}[t]{0.95\linewidth}
   \caption{ $\Fn{\Generate}{\HyperRect,\Node}$ } 
   \label{algo:sample}
   \eIf{ $\node.c[0] = \node.c[1] = \emptyset$ }
   {
      $\Point \leftarrow \Fn{\Proc{SampleUniform}}{ \HyperRect }$ \;
      $\node.\NumSamples \leftarrow \node.\NumSamples + 1$ \;
      $r = \Fn{\Proc{Collision\mh free}}{\Point}$ \; 
      \If{$r$}
      {
        $\node.\Point \leftarrow \Point$ \;
        $\node.\NumFree \leftarrow \node.\NumFree + 1$ \;
        $(\node.c[0],\node.c[1]) \leftarrow \Fn{\Proc{Split}}{\node,\Point}$ \; 
    \For{$i = \{0, 1\}$}
    {
        $\node.c[i].\Parent \leftarrow \Node $ \;
        $\node.c[i].j \leftarrow (\Node.j + 1) \bmod{\Dimension} $ \;
        $w \leftarrow \Fn{\Proc{Measure}}{\node.c[i]} / \Fn{\Proc{Measure}}{\node}$ \;
        $\node.c[i].\NumSamples \leftarrow w \cdot v.\NumSamples $ \;
        $\node.c[i].\NumFree \leftarrow w \cdot v.\NumFree $ \;
        $\node.c[i].\distMass = \left(\frac{\node.c[i].\NumFree }{ \node.c[i].\NumSamples} \right) \Fn{\Proc{Measure}}{ \Node.c[i] }$ \;
        }
      }
   }
   {
      $u \leftarrow \Fn{\Proc{SampleUniform}}{ [0,\node.\distMass] }$ \;
      \eIf{ $u \le \node.c[0].\distMass $ }
      {
         $(\Point,r) \leftarrow \Fn{\Generate}{ \Node.c[0] }$ \;
      }
      {
        $(\Point,r) \leftarrow \Fn{\Generate}{ \Node.c[1] }$ \;
      }
      
      $\node.\distMass = \node.c[0].\distMass + \node.c[1].\distMass$ \;
   }
   \Return $(\Point,r)$
\end{tightalgo} 

%Our algorithm appears in Algorithm~\ref{algo:sample}
%(this recursive procedure may be replaced with two loops and a stack if desired). 
%
Note that we store three additional fields in each
node of our our augmented kd-tree: $\NumSamples$, $\NumFree$, and $\distMass$. Both $\NumSamples$ and $\NumFree$ are only used by leaf nodes. 
$\NumSamples$ is the total number of samples taken from $\HyperRect$, and 
$\NumFree$ is the number of those samples that are collision free. When a leaf node generates a new sample (and thus creates its children), each child inherits a weighted version of $\NumSamples$ and $\NumFree$ from the parent. Both values are weighted by the relative measure of space contained in the child vs.\ the parent, and both account for the successful sample before weighting (lines 3, 7-14).
$\distMass$ is our estimate of the measure of free space contained in $\HyperRect$. For leaf nodes, $\distMass = \frac{\NumFree}{\NumSamples} \Fn{\Proc{Measure}}{\HyperRect}$, line 14. For non-leaf nodes, $\distMass$ is the cumulative sum of the values of $\distMass$ contained in the node's children $\distMass = c[0].\distMass + c[1].\distMass$, line 21.

See our original IROS paper \cite{bialkowski_IROS13} for a more comprehensive description of the algorithm.

The procedure generates new samples by starting at the root, and then recursively picking a child based on a random coin flip weighted by the amount of free-space believed to be in a child. Once a leaf node is reached, the sample point is drawn from a uniform distribution over that leaf's hyper-rectangle.
The result of the collision check are used to propagate new weight information propagated back up the tree.

\section{Analysis} \label{sec:analysis}

We now prove that the sampling distribution induced by our algorithm converges to a uniform distribution over the free space. 

Let $\obsSpace$, $\freeSpace$, and $\totalSpace$ denote the obstacle space, free
space, and total space respectively, where  $\totalSpace = \obsSpace \cup \freeSpace$
and $\obsSpace \cap \freeSpace = \emptyset$.
We use the notation $\spaceOf{\cdot}$ to denote the subset of space associated with a data structure element, e.g., ${\HyperRect=\spaceOf{\node}}$ is the hyper-rectangle of $\node$.
We also use $\probability{\cdot}$ to denote probability, and $\measure{\cdot}$ 
to denote the Lebesgue measure, e.g.,
$\measure{\freeSpace}$ is the hyper-volume of the free space.
We assume that the configuration space is bounded 
and that the boundaries of 
$\obsSpace$, $\freeSpace$, and $\totalSpace$ have measure zero. 
% therefore, $\measure{\totalSpace} < \infty$. We assume that $\obsSpace$ is a compact set and $\freeSpace$ is an open set.

Let $\ChildSet$ denote the set of children of $\Node$.
Each child ${\ChildParticular_i \in \ChildSet}$ represents a subset of ${\spaceOf{\node}}$ such that ${\bigcup_{i} \spaceOf{\ChildParticular_i} = \spaceOf{\node}}$ and ${\spaceOf{\ChildParticular_i} \cap \spaceOf{\ChildParticular_j} = \emptyset}$ for all ${i \neq j}$. In a kd-tree $|\ChildSet| = 2$.

{\it Note the wording in this section is tailored to the kd-tree version of our algorithm; however, the analysis is generally applicable to any related data-structure\footnote{In particular, we only require a tree-based space partitioning spatial index that is theoretically capable of containing any countably infinite set of points $\PointSet$, and such that the hyper-space of the leaf nodes covers the configuration space $\totalSpace = \UnionOver{\Node \in \leafNodeSet} \spaceOf{\Node}$. Our proofs can be modified to the general case by replacing `hyper-rectangle' with `hyper-space' and assuming that a weighted die determines the recursion path (instead of a coin). When the die is thrown at $\Node$ it has $|\ChildSet|$ sides and the weight of the $i$-th side is determined by the estimated value of $\measure{\spaceOf{\ChildParticular_i}}/ \measure{\spaceOf{\Node}}$ (i.e. the relative amount of free space believed to exist in child $\ChildParticular_i$ vs.\ its parent $\Node$).}.}

Let $\probDensity{n}{\cdot}$ be the probability density function for the sample returned  by Algorithm~\ref{algo:sample}, when the kd-tree contains $n$ points.
Let $\probUniformFree{\cdot}$ represent a probability density function such that $\probUniformFree{x_a}
= \probUniformFree{x_b}$ for all $x_a,x_b \in \freeSpace$ and
${\probUniformFree{x_c} = 0}$ for all ${x_c \in \obsSpace}$.
Let $\SampledPoint_{\free}$ and $\SampledPoint_n$ denote random variables drawn from the distributions 
defined by $\probUniformFree{\cdot}$ and $\probDensity{n}{\cdot}$, respectively.

We now prove that  
$\probability{\lim_{n \to \infty} \probDensity{n}{x} = \probUniformFree{x}}=1$, for almost all $x \in S$, %\mathrm{int}(S_\free) \cup \mathrm{int} (S_\obs)$,
%${\lim_{n \rightarrow \infty}\probDensity{n}{x} = \probUniformFreeHat{x} | \probability{\probUniformFreeHat{x} \in \mathbf{\hat{F}}} = 1}$. 
%
i.e., that the induced distribution of our algorithm converges to a distribution that is almost surely equal to $\probUniformFree{x}$ almost everywhere in $S$, possibly excluding a measure-zero subset.

We begin by observing that the nodes in the kd-tree may be classified into three sets:
\begin{itemize}
\item free nodes, $\freeNodeSet$, the set of nodes $\Node$ such that 
   $\measure{\spaceOf{\Node} \Intersect \obsSpace} = 0$ and 
   $\thereExists \Point \SuchThat \Point \in \spaceOf{\Node} 
   \land \Point \in \freeSpace $.
\item obstacle nodes, $\obsNodeSet$, the set of nodes, $\Node$ such that 
${\measure{\spaceOf{\obsNode} \Intersect \freeSpace} = 0}$ and 
$\thereExists \Point \SuchThat \Point \in \spaceOf{\Node} 
   \land \Point \in \obsSpace$
\item mixed nodes, $\mixedNodeSet$ contains all nodes do not fit the definition of a free node or a mixed node.
\end{itemize}

%In the particular algorithm that we present in Section~\ref{sec:algorithm}
%obstacle nodes cannot be created (because samples that are in collision are
%disregarded and not added to the kd-tree). However, in an alternative
%implementation obstacle nodes can be inserted into the kd-tree but not into the
%graph used for path search (this may be desirable in some applications). The
%analysis presented in this section proves convergence for both cases.

Although our algorithm is ignorant of the type of any given node
(otherwise we would not need it to begin with), an oracle would
know that free nodes contain free space almost everywhere, obstacle nodes contain
obstacle space almost everywhere, and mixed nodes contain both free space and obstacle space. Note that if $\Node \in \mixedNodeSet$ and $\measure{\spaceOf{\Node}} > 0$ then 
$\measure{\spaceOf{\Node} \Intersect \obsSpace} > 0$ and 
$\measure{\spaceOf{\Node} \Intersect \freeSpace} > 0$. It is possible for $\node$ such that $\measure{\spaceOf{\node}}=0$ to be both an obstacle node and a free node if it exists on the the boundary between $\obsSpace$ and $\freeSpace$; because their cumulative measure is zero, such nodes can be counted as both obstacle nodes and free nodes (or explicitly defined as either one or the other) without affecting our results.

\begin{figure}[!t]
\begin{minipage}[t]{0.48\linewidth}
  \centering
  \includegraphics[width=\linewidth]{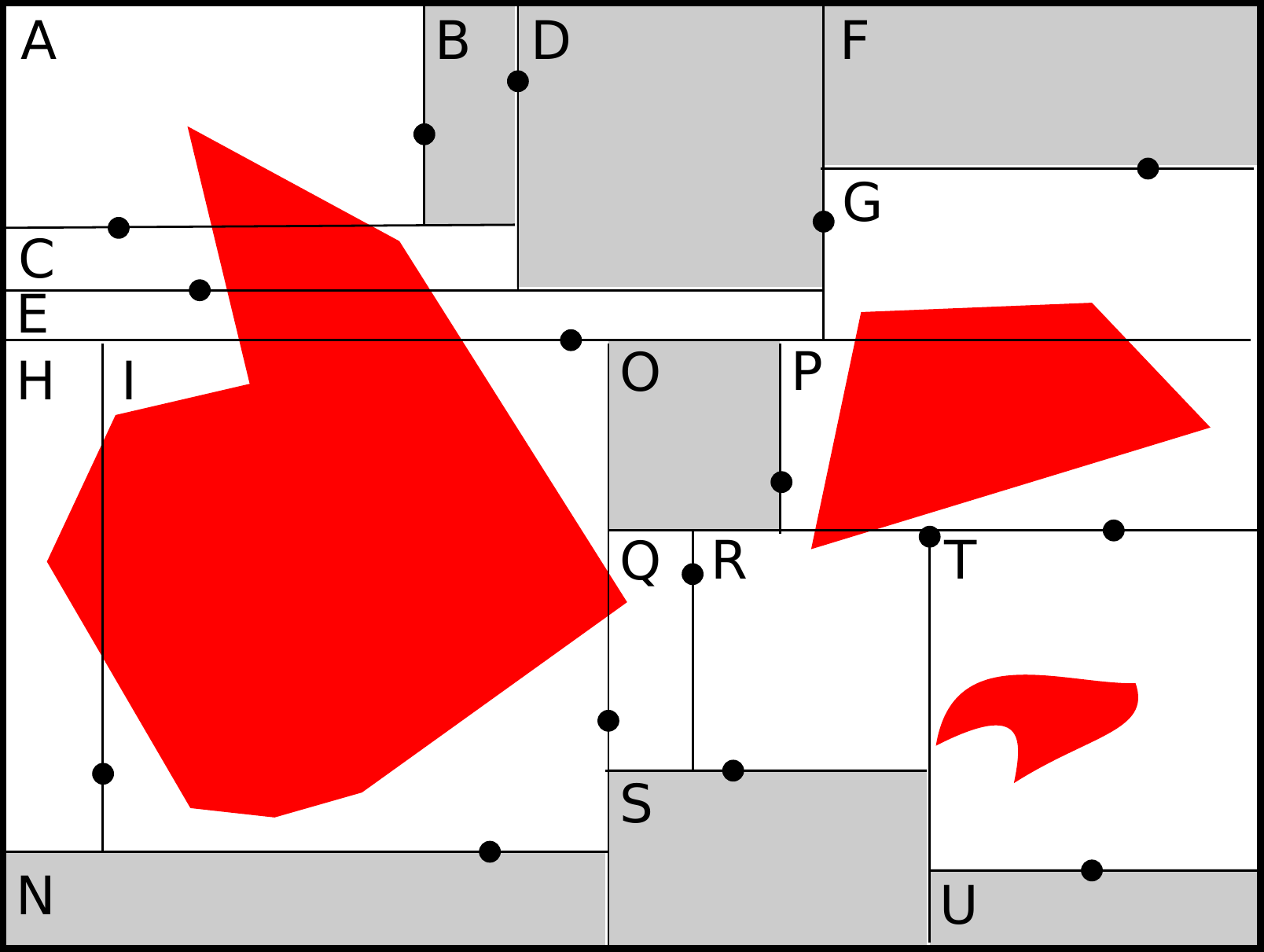}

  \vspace{.5cm}

  \includegraphics[width=\linewidth]{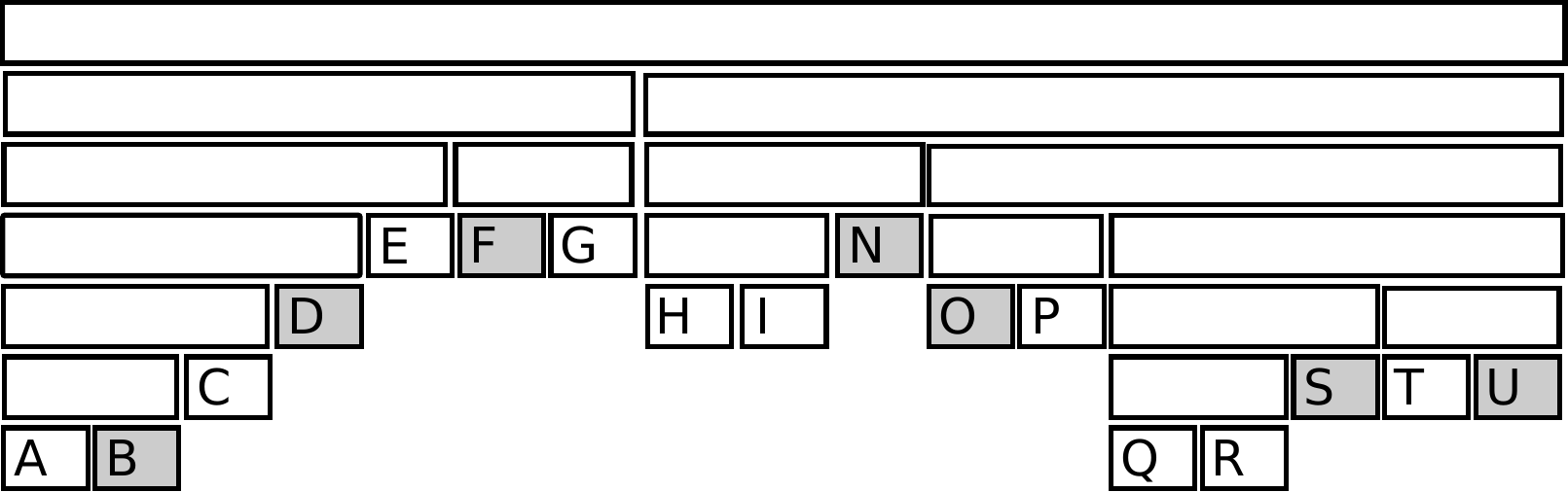}
\end{minipage}
\hspace{0.02\linewidth}
\begin{minipage}[t]{0.48\linewidth}
  \centering
  \includegraphics[width=\linewidth]{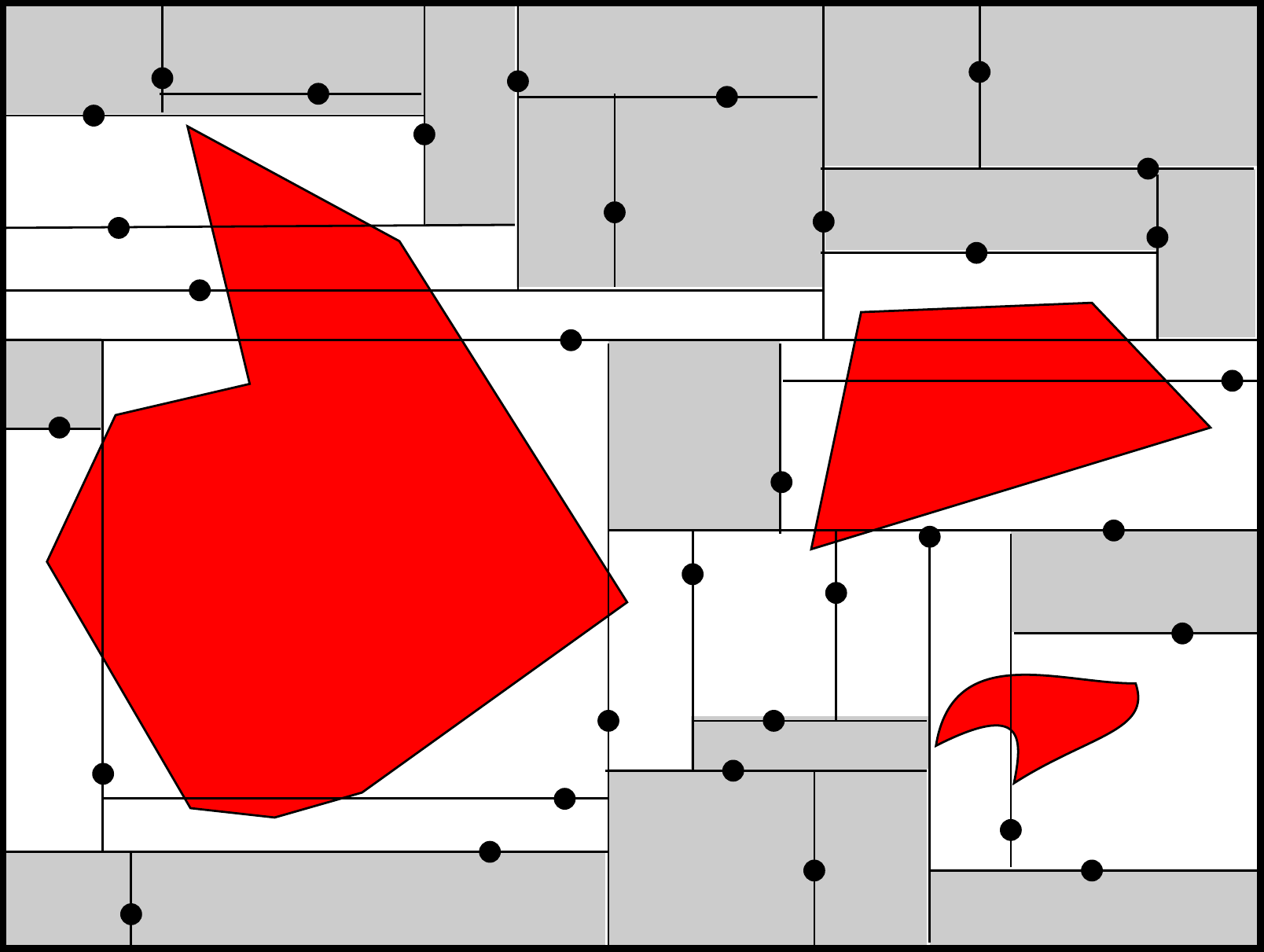}

  \vspace{.5cm}

  \includegraphics[width=\linewidth]{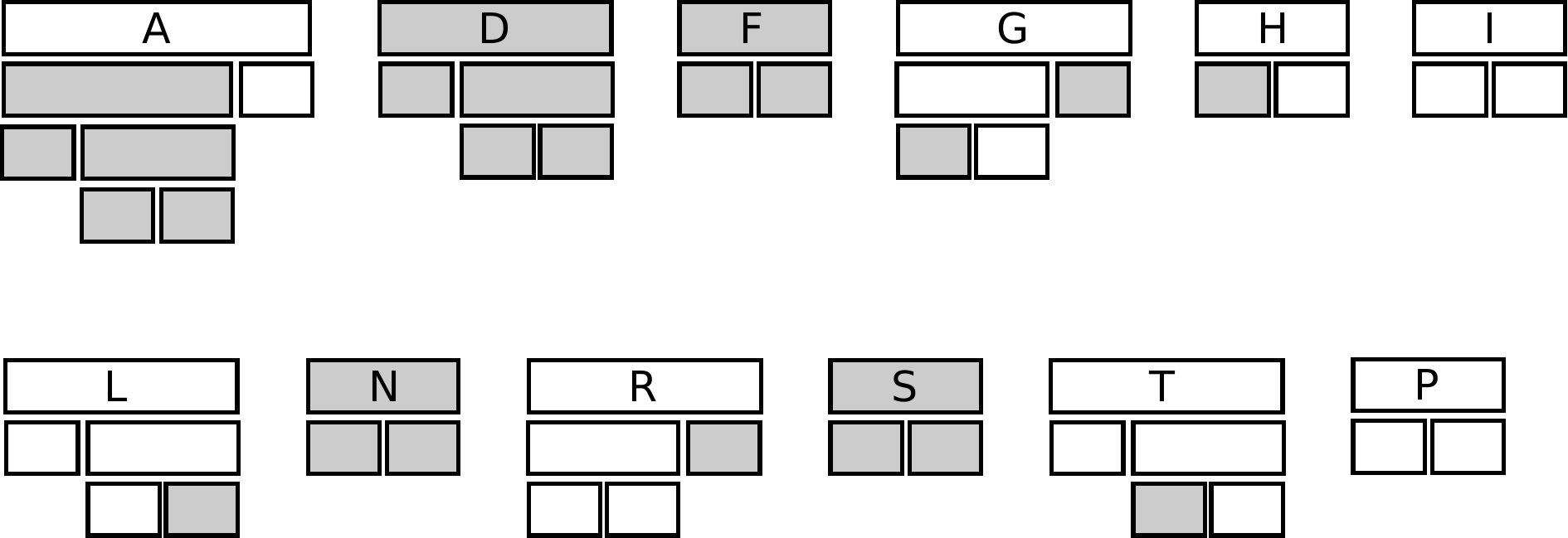}
\end{minipage}

\caption[kd-Tree 1]{The hyper-rectangles of leaf nodes (Top) from the corresponding kd-trees (Bottom). Letters show the correspondence between nodes and their hyper-rectangles. Left and Right show 28 and 41 points, respectively.
Obstacle space is red. Free nodes are gray and mixed nodes are white. Letters show the correspondence.
Descendants of a free node are always free. Mixed nodes
eventually produce free node descendants (the probability that an
obstacle node is produced is $0$). 
}

\label{fig:treeCD}
\end{figure}

We are particularly interested in the types of leaf nodes, because they cover $\totalSpace$ and also hold all of the mass that determines the induced sampling distribution.

Let $\leafNodeSet$ denote the set of leaf nodes, and let $\freeLeafNodeSet$, $\obsLeafNodeSet$, $\mixedLeafNodeSet$ denote the set of leaf nodes that are also free nodes, obstacle nodes, and mixed nodes, respectively.
We use $\spaceOf{\NodeSet} = \UnionOver{\Node \in \NodeSet} \spaceOf{\Node}$ to denote the space contained in all nodes in a set $\NodeSet$.
Figure~\ref{fig:treeCD} depicts the space contained in the set of leaf nodes, $\leafNodeSet$, of a particular kd-tree.

Recall that $\NodeMeasureFn{\Node}$ is the estimated probability mass that our algorithm associates with node $\node$.
%
%Let $\bigcup \leafNode$ denote the set of all leaf nodes. 
Let $\deep$ denote tree depth.

%Lemma: the probability of sampling a leaf node
\begin{proposition} \label{lemma:induced_prob}
$\probability{\SampledPoint_n \in \spaceOf{\node}} = 
    \displaystyle \NodeMeasureFn{\Node}/
        {\displaystyle \sum_{\Node' \in \leafNodeSet}  \NodeMeasureFn{\Node'} }$
\end{proposition}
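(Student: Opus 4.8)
The plan is to prove this by induction on the tree depth, tracing the recursive structure of Algorithm~\ref{algo:sample}. The key observation is that the algorithm defines a probability of reaching any given node, and the sample is ultimately drawn uniformly within whichever leaf is reached. So the probability that $\SampledPoint_n$ lands in $\spaceOf{\node}$ decomposes into (i) the probability that the recursion reaches $\node$, multiplied by (ii) the conditional probability that the sample falls in $\spaceOf{\node}$ given that a particular leaf was reached.

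First I would characterize the probability of reaching a given node. At each internal node $\node'$ on the path from the root, the algorithm draws $u \sim \mathrm{Uniform}[0,\node'.\distMass]$ and descends to child $c[0]$ with probability $\node'.c[0].\distMass / \node'.\distMass$ and to $c[1]$ with the complementary probability (lines 15--20). Crucially, by the update on line 21, $\node'.\distMass = \node'.c[0].\distMass + \node'.c[1].\distMass$, so these two branch probabilities sum to one and the denominators telescope: the probability of reaching a node $\node$ from the root equals $\NodeMeasureFn{\node} / \NodeMeasureFn{\text{root}}$. I would establish this telescoping product formally by induction on depth, since the $\distMass$ of each internal node is exactly the sum of its childrens' $\distMass$ values, so consecutive factors cancel. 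The base case is the root, reached with probability $1 = \NodeMeasureFn{\text{root}}/\NodeMeasureFn{\text{root}}$.

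Next I would treat the leaf case. A sample is only actually generated once a leaf is reached, and it is drawn uniformly over the leaf's hyper-rectangle, so the sample is contained in $\spaceOf{\node}$ if and only if $\node$ is (or contains) the leaf that was reached. For a leaf $\node \in \leafNodeSet$, combining with the reaching probability gives $\probability{\SampledPoint_n \in \spaceOf{\node}} = \NodeMeasureFn{\node}/\NodeMeasureFn{\text{root}}$. Finally, since the leaves partition the whole space ($\totalSpace = \UnionOver{\Node \in \leafNodeSet}\spaceOf{\Node}$ with pairwise-disjoint interiors) and each is reached with probability $\NodeMeasureFn{\Node}/\NodeMeasureFn{\text{root}}$, these probabilities must sum to one; this forces $\NodeMeasureFn{\text{root}} = \sum_{\Node' \in \leafNodeSet}\NodeMeasureFn{\Node'}$, which I would substitute to obtain the claimed normalization. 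For a general (non-leaf) node $\node$, the event $\{\SampledPoint_n \in \spaceOf{\node}\}$ is the disjoint union over the leaf-descendants of $\node$, and summing their individual probabilities again yields $\NodeMeasureFn{\node}/\sum_{\Node' \in \leafNodeSet}\NodeMeasureFn{\Node'}$ after using the same additive property of $\distMass$.

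The main obstacle I anticipate is being careful about the normalization constant: the branch probabilities at each internal node are normalized by that node's own $\distMass$, not by the root's, so the clean result emerges only after observing the telescoping cancellation along the path and then invoking the partition property of the leaves to identify $\NodeMeasureFn{\text{root}}$ with the full leaf-sum. A secondary subtlety is handling measure-zero boundary overlaps between adjacent leaves, but since those contribute nothing to the sampling probability (the sample is drawn from a continuous uniform distribution), the disjointness needed for the summation holds up to a measure-zero set and does not affect the result.
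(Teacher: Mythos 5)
Your proposal is correct and takes essentially the same approach as the paper: the paper's proof is a one-line appeal to construction, citing exactly the ingredients you elaborate --- the weighted branch choice on lines 16--17 and the additivity $\NodeMeasureFn{\Node} = \node.c[0].\distMass + \node.c[1].\distMass$ from line 21, which together give the telescoping product and identify the root's mass with $\sum_{\Node' \in \leafNodeSet}\NodeMeasureFn{\Node'}$. Your write-up is simply a fully detailed version of what the paper leaves implicit.
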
 
\begin{proof} 
This is true by the construction of our algorithm. In particular, from lines 16-17 and 21.
\end{proof}

%Lemma: nodes at finite tree depth have weight > 0
\begin{proposition} \label{lemma:positive_wieght}
For all $\Node$ at depth $\deep>1$ such that  
$\measure{\spaceOf{\Node}} > 0$ there exists some $\delta > 0$ such that ${\NodeNumFreeFn{\node} > \delta}$.
\end{proposition}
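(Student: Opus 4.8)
The plan is to track a single node through its lifetime and to rely on the fact that the $\NumFree$ field is never decreased by Algorithm~\ref{algo:sample}.

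First I would note that a node $\node$ at depth $\deep > 1$ is not the root, so it has a parent, which I denote $\Parent$, and $\node$ can only have come into being through a split of $\Parent$. In Algorithm~\ref{algo:sample} a leaf is split (the assignment $(\node.c[0],\node.c[1]) \leftarrow \Fn{\Proc{Split}}{\node,\Point}$) only inside the branch guarded by a successful collision check, and that branch is entered only after the increment $\NodeNumFreeFn{\Parent} \leftarrow \NodeNumFreeFn{\Parent} + 1$. Therefore, at the instant $\node$ is created we are guaranteed $\NodeNumFreeFn{\Parent} \geq 1$.

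Next I would evaluate what $\node$ inherits. Each child is assigned $\NodeNumFreeFn{\node} \leftarrow \WeightRect \cdot \NodeNumFreeFn{\Parent}$, where $\WeightRect = \Fn{\Proc{Measure}}{\node}/\Fn{\Proc{Measure}}{\Parent} = \measure{\spaceOf{\node}}/\measure{\spaceOf{\Parent}}$. The hypothesis $\measure{\spaceOf{\node}} > 0$ makes the numerator positive, and boundedness of the configuration space makes $\measure{\spaceOf{\Parent}}$ finite, so $\WeightRect > 0$; together with $\NodeNumFreeFn{\Parent} \geq 1$ this gives $\NodeNumFreeFn{\node} \geq \WeightRect > 0$ the moment $\node$ appears.

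Finally I would argue that this positivity persists. The only writes to any $\NumFree$ field are the increment (which only adds) and the inheritance assignment (which fires only once, at a node's own creation). Hence after $\node$ is created its $\NumFree$ is monotone nondecreasing: it is incremented while $\node$ is a leaf that draws a free sample, and frozen once $\node$ becomes interior. So $\NodeNumFreeFn{\node} \geq \measure{\spaceOf{\node}}/\measure{\spaceOf{\Parent}} > 0$ at all times, and any $\delta$ in $(0, \measure{\spaceOf{\node}}/\measure{\spaceOf{\Parent}}]$ establishes the claim. The main obstacle is the bookkeeping: I must tie the split event precisely to the collision-check branch so that the bound $\NodeNumFreeFn{\Parent} \geq 1$ holds exactly when the inheritance weight is applied, and note that the restriction $\deep > 1$ is what excludes the root (and any nodes present before the first split), whose free count may legitimately be zero prior to a successful sample.
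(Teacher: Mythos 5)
Your proof is correct and takes essentially the same route as the paper's own: the split of the parent is gated by a successful collision check (so the parent's $\NumFree$ is at least $1$ at the moment of the split), hence the child is created with $\NodeNumFreeFn{\node} \ge \measure{\spaceOf{\node}}/\measure{\spaceOf{\Parent}} > 0$, a bound that never decreases---this is exactly the paper's ``by construction (lines 7, 11, 13)'' argument, which you spell out in more detail by making the monotonicity of $\NumFree$ explicit. The only quibble is the endpoint of your final interval: your argument establishes only $\NodeNumFreeFn{\node} \ge \WeightRect$ (and equality can occur, e.g.\ when the parent splits on its very first free sample), so $\delta = \WeightRect$ need not satisfy the strict inequality $\NodeNumFreeFn{\node} > \delta$; the interval should be open on the right, $\delta \in \left(0, \WeightRect\right)$, which is how the paper chooses $\delta$.
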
 
\begin{proof} 
All nodes at depth $\deep > 1$ have a parent $\NodeParentFn{\Node}$, which must have generated at least one sample in order to create $\node$. Since ${\measure{\spaceOf{\node}} > 0}$, we know that 
${\measure{\spaceOf{ \NodeParentFn{\Node} }} > 0}$. Therefore, by construction (lines 7, 11, 13) we know 
${ \NodeNumFreeFn{\Node} \geq 
   \frac{\measure{\spaceOf{ \Node }}}
        {\measure{\spaceOf{ \NodeParentFn{\Node} }}} > 0}$. 
Thus, the lemma is true for $\delta$ such that 
${0 < \delta < 
   \frac{\measure{\spaceOf{\node}}}{\measure{\spaceOf{ \NodeParentFn{\Node} }}}}$.
\end{proof}

%Lemma: any leaf node with non-infitemislal measure is sampled an infinite number of times
\begin{lemma} \label{lemma:infinite_samples}
For a particular node $\Node$, let $\NumSampFn{\Node}{n}$ be the number of times that a sample was generated from $\spaceOf{\Node}$ when the kd-tree has $n$ nodes. Then, for all $\Node$ such that $\measure{\spaceOf{\Node}} > 0$, 
$\probability{\lim_{n \rightarrow \infty} \NumSampFn{\node}{n} = +\infty} = 1$.
\end{lemma}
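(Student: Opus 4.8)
The plan is to index time by the number of samples the algorithm has drawn rather than directly by the node count: write $A_m$ for the event that the $m$-th sample lands in $\spaceOf{\Node}$, and let $\mathcal{F}_{m-1}$ be the $\sigma$-algebra generated by the first $m-1$ samples (equivalently, by the entire tree state just before the $m$-th draw), so that $A_m\in\mathcal{F}_m$. Since $\NumSampFn{\Node}{n}$ is nondecreasing and the tree can only grow as the algorithm runs indefinitely, it is equivalent to show that, over the full sample sequence, infinitely many samples land in $\spaceOf{\Node}$, i.e.\ that $A_m$ occurs infinitely often almost surely. I would establish this with the conditional (L\'evy) form of the second Borel--Cantelli lemma, which states that almost surely $\{A_m \text{ i.o.}\}$ coincides with $\{\sum_m \probability{A_m \mid \mathcal{F}_{m-1}} = \infty\}$. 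The entire proof then reduces to verifying $\sum_m \probability{A_m \mid \mathcal{F}_{m-1}} = \infty$ almost surely.

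First I would compute the per-step conditional probability. Extending Proposition~\ref{lemma:induced_prob} from a single leaf to the region $\spaceOf{\Node}$, which is the disjoint union of the hyper-rectangles of the current leaf descendants of $\Node$, I obtain $\probability{A_m \mid \mathcal{F}_{m-1}} = S_{m-1}/Z_{m-1}$, where $S_{m-1} = \sum_{\ell} \NodeMeasureFn{\ell}$ sums over the leaf descendants $\ell$ of $\Node$ and $Z_{m-1} = \sum_{\ell' \in \leafNodeSet} \NodeMeasureFn{\ell'}$ is the total mass over all leaves. The invariant $\NumFree \le \NumSamples$ (preserved by the equal-weight inheritance in lines 11--13 and by the increments in lines 3 and 7) gives $\NodeMeasureFn{\ell} = (\NumFree/\NumSamples)\,\measure{\spaceOf{\ell}} \le \measure{\spaceOf{\ell}}$ for every leaf; since the leaves partition $\totalSpace$ this yields the uniform bound $Z_{m-1} \le \measure{\totalSpace}$. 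Hence $\probability{A_m \mid \mathcal{F}_{m-1}} \ge S_{m-1}/\measure{\totalSpace}$, and it remains only to show $\sum_m S_{m-1} = \infty$ almost surely.

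I would finish by contradiction. Suppose that on an event of positive probability $\spaceOf{\Node}$ receives only finitely many samples. On that event there is a last sample to fall in $\spaceOf{\Node}$; after it no leaf descendant of $\Node$ is ever sampled again, so no further splits occur in the subtree rooted at $\Node$ and every descendant's $\NumSamples,\NumFree$, and hence every $\NodeMeasureFn{\ell}$, freezes, making $S_m$ eventually equal to a constant $S_\infty$. Because $\measure{\spaceOf{\Node}}>0$ and the frozen subtree has finitely many descendants, at least one leaf descendant $\ell^{\ast}$ has $\measure{\spaceOf{\ell^{\ast}}}>0$; Proposition~\ref{lemma:positive_wieght} then gives $\NodeNumFreeFn{\ell^{\ast}}>\delta>0$ while its frozen $\NumSamples$ is finite, so $\NodeMeasureFn{\ell^{\ast}}>0$ and therefore $S_\infty>0$. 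Consequently $\sum_m \probability{A_m\mid\mathcal{F}_{m-1}} \ge (1/\measure{\totalSpace})\sum_{m\ge m_0} S_\infty = \infty$, and by the L\'evy lemma $A_m$ occurs infinitely often on this event, contradicting finiteness. Thus $\probability{\lim_{n\to\infty}\NumSampFn{\Node}{n}=+\infty}=1$.

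The step I expect to be most delicate is the freeze argument for $S_\infty>0$: I must confirm that ``finitely many samples in $\spaceOf{\Node}$'' genuinely halts all structural change in the subtree (so the $\NumSamples$ values remain finite and the estimated free mass cannot leak to zero), and that Proposition~\ref{lemma:positive_wieght} applies to the surviving positive-measure leaf, which requires checking its depth hypothesis (the root and depth-one nodes are handled directly, as they are sampled at essentially every step). A secondary bookkeeping point worth stating carefully is the reindexing between node count $n$ and sample count $m$, together with the adaptedness $A_m\in\mathcal{F}_m$ needed to invoke the conditional Borel--Cantelli lemma.
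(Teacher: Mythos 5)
Your proof is correct, and its skeleton is the same as the paper's: both arguments rest on the bound ${\sum_{\Node' \in \leafNodeSet} \NodeMeasureFn{\Node'} \le \measure{\totalSpace}}$ (obtained from the invariant $\NumFree \le \NumSamples$), on Proposition~\ref{lemma:positive_wieght} to keep the mass of a frozen positive-measure descendant of $\Node$ bounded away from zero, and on the observation that all relevant counters freeze once samples stop landing in $\spaceOf{\Node}$, so that the per-step probability of hitting $\spaceOf{\Node}$ stays above a fixed positive constant. The difference is the finishing tool, and yours is the more rigorous one. The paper bounds the probability of never returning by the product $\prod_{i=\hat n}^{n}\bigl(1 - k/\NodeNumTotalFn{\Node}\bigr) \to 0$ and then appeals to induction to upgrade ``at least one more sample'' to ``infinitely many''; taken literally, a product bound over dependent trials needs justification, and the induction over countably many return events is left implicit. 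Your conditional (L\'evy) second Borel--Cantelli argument is precisely the device that repairs both gaps: conditioning on $\mathcal{F}_{m-1}$ handles the dependence between trials, and the lemma delivers ``infinitely often'' in one step, with your contradiction framing replacing the induction entirely. You also treat internal nodes cleanly by summing mass over the leaf descendants of $\Node$, whereas the paper restricts attention to the ``worst case'' in which $\Node$ remains a leaf forever. The delicate points you flag---the depth hypothesis of Proposition~\ref{lemma:positive_wieght} (with the root disposed of separately, since every sample lands in it) and the reindexing from node count to sample count---are genuine but minor, and you resolve them correctly.
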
 
\begin{proof} 
We begin by obtaining two intermediate results:

First, 
$\NodeNumFreeFn{\Node} \leq \NodeNumTotalFn{\Node}$ for all $\Node$ by construction (lines 3, 5, 7, 11, 13). Thus, for all leaf nodes $ \Node \in\leafNodeSet$ it is guaranteed $ \NodeMeasureFn{\Node} \leq \measure{\spaceOf{\Node}}$ by the definition of $\NodeMeasure$ (line 12). 
Recall that the set of leaf nodes covers the space ${ \spaceOf{\leafNodeSet} = \totalSpace}$ and that the space in each leaf node is non-overlapping ${\spaceOf{\Node_i} \Intersect \spaceOf{\Node_j} = \emptyset}$ for all $\Node_i, \Node_j \in \leafNodeSet$, $\Node_i \neq \Node_j$. 
Thus, we can sum over all leaf nodes to obtain the bound: 
${\sum_{\Node \in \leafNodeSet}  \NodeMeasureFn{\Node} 
   \leq \sum_{\Node \in \leafNodeSet} \measure{\spaceOf{\Node}} 
   = \measure{\totalSpace}}$.

Second, using Proposition~\ref{lemma:positive_wieght} we know that for any particular node ${\node}$ with positive measure ${\measure{\spaceOf{\Node}} > 0}$ there exists some $\delta$ such that ${ \NodeNumFreeFn{\Node} > \delta}$.% (note that if we pick a particular node, then it must be the case that ${\deep < \infty}$ for that node).
Thus, the following bound always holds: 
${ \NodeMeasureFn{\Node} 
   = \measure{\spaceOf{\Node}} 
      \frac{ \NodeNumFreeFn{\Node} }{ \NodeNumTotalFn{\Node} } 
   \geq \measure{\spaceOf{\Node}}
      \frac{\delta}{ \NodeNumTotalFn{\Node} }}$ 
(where the first equality is by definition). Note this is the {\it worst case} situation in which node $\Node$ always samples from obstacle space (and thus $\Node$ remains a leaf node forever).
%
%%It is also true that $ \NodeNumTotalFn{\Node} \leq n$, and so 
%%${ \NodeMeasureFn{\Node} 
%%   \geq \measure{\spaceOf{\Node}}
%%      \frac{\delta}{ \NodeNumTotalFn{\Node} } 
%%   \geq \measure{\spaceOf{\Node}}
%%      \frac{\delta}{n}}$. 
%%Combining this with the facts presented in the first half of the proof yields:
%%$$
%%\probability{\SampledPoint_n \in \spaceOf{\Node}} 
%%   = \frac{ \NodeMeasureFn{\Node}}
%%          {\sum_{\Node' \in \leafNodeSet} \NodeMeasureFn{\Node'} } 
%%   \geq \frac{\delta \measure{\spaceOf{\Node}} }
%%         {n \measure{\totalSpace}} 
%%$$
Thus, 
${ \NodeMeasureFn{\Node} 
   \geq \measure{\spaceOf{\Node}}
      \frac{\delta}{ \NodeNumTotalFn{\Node} }}$. 

Combining the first and second results yields:
$$
\probability{\SampledPoint_n \in \spaceOf{\Node}} 
   = \frac{ \NodeMeasureFn{\Node}}
          {\sum_{\Node' \in \leafNodeSet} \NodeMeasureFn{\Node'} } 
   \geq \frac{\delta \measure{\spaceOf{\Node}} }
         {\NodeNumTotalFn{\Node} \measure{\totalSpace}} 
$$
%%Where the left equality is by Lemma~\ref{lemma:induced_prob}. By definition ${\frac{\delta \measure{\spaceOf{\Node}} }{\measure{\totalSpace}} = k}$ is a constant, and so the total expected number of samples taken from $\spaceOf{\node}$, in the limit as ${n \rightarrow \infty}$, is lower bounded: 
%%$\lim_{n \rightarrow \infty} \expectation{ \Card{ \SetOf{ 
%%   i \SuchThat \SampledPoint_i \in \spaceOf{\node} } } } 
%%   = \lim_{n \rightarrow \infty} \sum_{i=a}^n \probability{
%%      \SampledPoint_i \in \spaceOf{\Node}} 
%%   \geq \lim_{n \rightarrow \infty} k \sum_{i=a}^n \frac{1}{n} 
%%   = \infty$, where $a$ is the iteration that $\Node$ was created. The final step comes from the fact that $\lim_{n \rightarrow \infty}\sum_{i=a}^n \frac{1}{n}$ is divergent for all finite $a$.
Where the left equality is by Proposition~\ref{lemma:induced_prob}. By definition ${\frac{\delta \measure{\spaceOf{\Node}} }{\measure{\totalSpace}} = k}$ is a constant, and so 
$
\probability{\SampledPoint_n \in \spaceOf{\Node}} 
   \geq \frac{k}{\NodeNumTotalFn{\Node}} 
$.
By definition, $\NodeNumTotalFn{\Node}$ only increase when we draw a sample from $\spaceOf{\Node}$. Let $\hat{n}$ be the iteration at which the previous sample was generated from $\spaceOf{\Node}$. 
The probability that we {\it never again} generate a sample from $\spaceOf{\Node}$ is bounded:
$${\probability{\lim_{n \rightarrow \infty} \NumSampFn{\node}{n} = \NumSampFn{\node}{\hat{n}}} \leq \lim_{n \rightarrow \infty} \prod_{i = \hat{n}}^n\left(1 - \frac{k}{\NodeNumTotalFn{\Node}} \right) = 0}$$
for all ${{\NodeNumTotalFn{\Node} < \infty}}$ and ${\NumSampFn{\node}{\hat{n}} < \infty}$ (and thus ${\hat{n} < \infty}$).
The rest of the proof follows from induction.
\end{proof}

%Lemma: mixed leaf nodes have measure 0 free space in the limit almost surely
%\begin{lemma} \label{lemma:measure_zero}
%$\probability{\lim_{n \rightarrow \infty} 
%   \measure{\spaceOf{\Node} \Intersect \freeSpace} = 0} = 1$
%for all mixed leaf nodes $\Node \in \mixedLeafNodeSet$ 
%\end{lemma} 
%\begin{proof} 
%({\it By contradiction}) 
%\jbmargin{Assume there exists}{I have concerns with this} 
%$\Node \in \mixedLeafNodeSet$ such that 
%${\lim_{n \rightarrow \infty} \measure{\spaceOf{\Node} \cap \freeSpace} > 0}$ almost surely. By Lemma~\ref{lemma:infinite_samples} we know that, in the limit as ${n \rightarrow \infty}$, the node $\Node$ is sampled an infinite number of times. Thus, in the limit as ${n \rightarrow \infty}$, the probability that some sample was drawn from $\spaceOf{\Node} \Intersect \freeSpace$ is $\lim_{n \rightarrow \infty}\probability{\exists \Point | \Point \in \spaceOf{\mixedLeafNode} \cap \freeSpace} = { \lim_{n \rightarrow \infty} 1 - \left(1 - \frac{\delta}{\measure{\spaceOf{\Node}}} \right)^n = 1}$, and so $\Node$ is not a leaf node almost surely (providing the contradiction). 
%\end{proof}

%Lemma: mixed leaf nodes have measure 0 free space in the limit almost surely
\begin{lemma} \label{lemma:measure_zero}
Let $\freeNodeSet_n$ be the set of free nodes in the tree of $n$ samples. 
Then, for all $\Point \in \freeSpace$, $\lim_{n\rightarrow \infty}
{\probability{ \thereExists \Node \in \freeNodeSet_n \SuchThat \Point \in \closedSpaceOf{\Node} } = 1}$
\end{lemma}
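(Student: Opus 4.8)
The plan is to show that, almost surely, the leaf cell that contains $x$ eventually shrinks small enough to lie entirely inside the free space, at which point it is a free node with $x$ in its closure; since once this happens it remains true (descendants of a free node are free), the probability in the statement converges to $1$. First I would fix $x \in \freeSpace$ and treat the topologically interior case, which is all of $\freeSpace$ except a measure-zero subset of its boundary: there is an $\epsilon>0$ with $\epsBoxPoint \subseteq \freeSpace$, where $\epsBoxPoint$ is the axis-aligned box of half-width $\epsilon$ centered at $x$. Let $\Node_k$ denote the leaf whose hyper-rectangle contains $x$ after the $k$-th split on the path from the root to $x$. As soon as $\operatorname{diam}\Paren{\spaceOf{\Node_k}} < \epsilon$, the cell $\spaceOf{\Node_k}$ (which contains $x$) lies in $\epsBoxPoint \subseteq \freeSpace$, so $\measure{\spaceOf{\Node_k}\Intersect\obsSpace}=0$ while $x \in \spaceOf{\Node_k}\Intersect\freeSpace$; hence $\Node_k$ is a free node containing $x$. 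It therefore suffices to prove that $\operatorname{diam}\Paren{\spaceOf{\Node_k}} \to 0$ almost surely.

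Next I would verify that the path $\Node_0 \supset \Node_1 \supset \cdots$ is infinite almost surely. Each $\Node_k$ has $\measure{\spaceOf{\Node_k}}>0$ and, since it contains the free neighborhood of $x$ intersected with its rectangle, positive free measure as well. Were $\Node_k$ never to split, it would remain a leaf and, by Lemma~\ref{lemma:infinite_samples}, receive infinitely many uniform samples, each collision-free with the fixed positive probability $\measure{\spaceOf{\Node_k}\Intersect\freeSpace}/\measure{\spaceOf{\Node_k}}$; a collision-free sample triggers a split (Algorithm~\ref{algo:sample}, line~7), so almost surely such a split occurs and produces $\Node_{k+1}$. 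Intersecting these countably many probability-one events shows the path never terminates, a.s.

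The crux is coordinate-wise shrinkage. Because the kd-tree cycles the split index, along an infinite path each of the $\Dimension$ axes is cut infinitely often. Fix an axis $j$; the length $\ell_j^{(k)}$ of the interval that $\spaceOf{\Node_k}$ occupies in axis $j$ is non-increasing, hence converges to some $\ell_j \ge 0$. If $\ell_j>0$, then a fixed band $(x_j-\eta,\,x_j+\eta)$ around the $j$-th coordinate of $x$ stays inside that interval for $\eta<\ell_j/2$; since $x$ is interior, the collision-free sample point that triggers each split has its $j$-coordinate landing in this band with conditional probability bounded below (the free slices near $x$ have positive measure), so by the conditional (second) Borel--Cantelli lemma infinitely many cuts fall within $\eta$ of $x_j$, forcing the length below $2\eta<\ell_j$ --- a contradiction. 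Thus $\ell_j=0$ for every $j$, so $\operatorname{diam}\Paren{\spaceOf{\Node_k}} \to 0$ and $x$ lies in a free leaf for all large $n$ almost surely, giving $\probability{\thereExists \Node \in \freeNodeSet_n \SuchThat x \in \closedSpaceOf{\Node}} \to 1$. I expect this step to be the main obstacle: the split point, being conditioned to be collision-free, is \emph{not} uniform along the axis, and the argument hinges on interiority of $x$ supplying the positive, bounded-below density of cut points on both sides of $x_j$ that the Borel--Cantelli step requires.

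Finally, the measure-zero set of boundary points $x \in \freeSpace$ lying on $\partial\freeSpace$ is exactly what the closure $\closedSpaceOf{\Node}$ in the statement accommodates: here one instead tracks the free-side child produced by a nearby split, whose closure contains $x$, so that $x \in \closedSpaceOf{\Node}$ for that free node even though no free cell contains $x$ in its interior.
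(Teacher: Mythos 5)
Your plan follows essentially the same route as the paper's proof: restrict to interior points of $\freeSpace$ so that an axis-aligned $\epsilon$-neighborhood $\epsBoxPoint$ lies in $\freeSpace$, invoke Lemma~\ref{lemma:infinite_samples} to guarantee that the leaf containing $\Point$ splits forever, and then use the fact that each split's cut coordinate lands near $\Point$ with conditional probability bounded below to conclude that the cell containing $\Point$ almost surely ends up inside $\epsBoxPoint$, hence becomes a free node. The only difference is bookkeeping: the paper exhibits $2\Dimension$ bracketing sample points inside $\epsBoxPoint$ (one on each side of $\Point$ per axis) whose splitting planes pin the cell inside the box, while you prove per-axis interval lengths shrink, by contradiction via Borel--Cantelli. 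These are the same mechanism, packaged differently.

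However, one step in your crux paragraph is false as written. When $\ell_j = \lim_k \ell_j^{(k)} > 0$, you assert that the band $(x_j-\eta,\,x_j+\eta)$ ``stays inside that interval'' and that cuts land on \emph{both} sides of $x_j$ with conditional probability bounded below. But $x_j$ need not be interior to the limiting interval: writing the cell's $j$-th interval as $[a_j^{(k)}, b_j^{(k)}]$, the cuts on one side may accumulate at $x_j$ (e.g., $a_j^{(k)} \uparrow x_j$ while $b_j^{(k)}$ stays put), in which case the sliver of the cell to the left of $x_j$ has vanishing measure and the conditional probability of a cut in $(x_j-\eta,\,x_j)$ tends to $0$ rather than being bounded below; the two-sided Borel--Cantelli step fails in exactly this case. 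The repair is easy and stays within your framework: if $\ell_j>0$ then at least one side of the limit interval extends a distance $\rho>0$ beyond $x_j$, say $b_j - x_j = \rho > 0$; cuts with $j$-coordinate in $\left(x_j,\, x_j+\tfrac{1}{2}\min(\rho,\epsilon)\right)$ \emph{do} have conditional probability bounded below (interiority of $\Point$ bounds the free measure of that slice from below, and the full cell measure bounds the denominator), so almost surely such a cut occurs among the infinitely many axis-$j$ splits, contradicting $b_j^{(k)} \geq x_j + \rho$ for all $k$. With that one-sided fix your argument is sound. (Your closing remark about points of $\freeSpace$ on $\partial \freeSpace$ is hand-waving, but the paper's own proof likewise treats only $\mathrm{int}(\freeSpace)$, which suffices because boundaries are assumed to have measure zero.)
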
 
\begin{proof} 
Let $\epsBoxPoint$ be the open L1-ball with radius $\epsilon$ that is centered at point $\Point$.
For all $\Point \in \mathrm{int}(\freeSpace)$ there exists some ${\epsilon > 0}$
for which $\epsBoxPoint \subset \freeSpace$. % (given that $\freeSpace$ is open and $\obsSpace$ is closed).
Therefore, it is sufficient to prove that for $\Point \in \freeSpace$, 
$\lim_{n\rightarrow \infty}
{\probability{ \thereExists \Node \in \freeNodeSet_n \SuchThat \Node
      \subset \epsBoxPoint} = 1}$.
  
%This is because if $\spaceOf{\freeNode} \subset \epsBoxPoint$ with $\epsilon > 0$ then $\freeNode$ is a free node by the definition of $\epsBoxPoint$.
%
Without loss of generality, we now consider a particular $\Point$.
At any point during the run of the algorithm there is some leaf node 
${\leafNode \SuchThat \leafNode \owns \Point}$.

Lemma~\ref{lemma:infinite_samples} guarantees that ${\leafNode \owns \Point}$ will almost surely split into two children, 
one of which will also contain $\Point$, etc. Let $\node_{\deep, \Point}$ represent the node at depth $\deep$ that contains $\Point$. Let ${\SampledPoint_\deep \in \spaceOf{\node_{\deep, \Point}}}$ be the sample point that causes $\node_{\deep, \Point}$ to split. Let $\PointCoord{\Point}{i}$ refer to the $i$-th coordinate of $\Point$. 
Thus, the splitting plane is normal to the ${\deep \bmod{\Dimension}}$-axis, and intersects that axis at ${\SampledPoint_\deep[\deep \bmod{\Dimension}]}$, where $\Dimension$ is the dimensionality of the space.

Each time the current leaf ${\node_{\deep, \Point}\owns\Point}$ splits
${\probability{
   \!\SampledPoint_\deep \in \epsBoxPoint 
   \land \PointCoord{\SampledPoint_\deep}{i} < \PointCoord{\Point}{i} } 
   \!=\! \frac
      {\measure{\spaceOf{\node_{\deep, \Point}} \Intersect \spaceOf{\epsBoxPoint}}}
      {2\measure{\spaceOf{\node_{\deep, \Point}}\Intersect \freeSpace}} 
   \!>\! 0}$. By construction
${\frac
   {\measure{ \spaceOf{ \node_{\deep+\Dimension, \Point}} 
      \Intersect \spaceOf{\epsBoxPoint}}}
   {\measure{\spaceOf{\node_{\deep+\Dimension, \Point}}
      \Intersect \freeSpace}} 
   \geq \frac
   {\measure{\spaceOf{\node_{\deep, \Point}}
      \Intersect \spaceOf{\epsBoxPoint}}}
   {\measure{\spaceOf{\node_{\deep, \Point}} \Intersect \freeSpace}}}$
so
%in the limit as ${\deep \rightarrow \infty}$, one of the nodes containing $\Point$ is almost surely split by some point $\hat{\SampledPoint} \in \epsBoxPoint$ along dimension $i$ such $\hat{\SampledPoint}_\deep[i] < \hat{\SampledPoint}[i]$.
%
{\small
$\displaystyle{\lim_{\deep \rightarrow \infty} \probability{
      \! \thereExists \SampledPoint_\deep 
      \SuchThat \SampledPoint_\deep  \in  \epsBoxPoint 
      \land \PointCoord{\SampledPoint_\deep}{\deep \bmod{\Dimension}}  
      <  \PointCoord{\Point}{\Dimension \bmod{\deep}} }\! =\! 1}$.
}%
A similar argument can be made for ${ \PointCoord{\SampledPoint}{i} > \PointCoord{\Point}{i} }$,
{\small
$\displaystyle{\lim_{\deep \rightarrow \infty} \probability{
   \! \thereExists \SampledPoint_\deep 
   \SuchThat \SampledPoint_\deep  \in  \epsBoxPoint 
   \land \PointCoord{\SampledPoint_\deep}{\deep \bmod{\Dimension}}   
   > \PointCoord{\Point}{\deep \bmod{\Dimension}} } \!=\! 1}$%
}.
%
%(Note $\probability{\hat{\SampledPoint}[i] = \hat{\SampledPoint}[i]} = 0$).
%
Thus, in the limit as ${\deep \rightarrow \infty}$, there will almost surely be a set of $2 \Dimension$ points 
$\{ \SampledPoint_{\deep_{1}}, \ldots, \SampledPoint_{\deep_{2\Dimension}} \}$ 
sampled at levels $\deep_1, \ldots, \deep_{2\Dimension}$, 
such that ${\SampledPoint_{\deep_i} \in \epsBoxPoint}$ for ${i = \{1, \ldots, 2\Dimension\}}$, and 
${i = \deep_i \bmod{\Dimension}}$ and
${ \PointCoord{ \SampledPoint_{\deep_i} }{i} < \PointCoord{\SampledPoint}{i} }$, and 
${i = \deep_{d+i} \bmod{\Dimension}}$ and
${ \PointCoord{ \SampledPoint_{\deep_{d+i}} }{i} > \PointCoord{\SampledPoint}{i} }$. 
By construction, $\SampledPoint_{\max_{i}(\deep_{i})}$ is on a splitting plane that borders a node ${\Node}$ such that ${\spaceOf{\Node} \owns \Point}$ and ${\Node \subset \epsBoxPoint}$ (and thus ${\Node \in \freeNodeSet}$). 
Lemma~\ref{lemma:infinite_samples} implies that ${\probability{\lim_{n\rightarrow\infty} \deep = \infty} = 1}$ for $\node_{\deep, \Point} \SuchThat \spaceOf{\node_{\deep, \Point}} \owns \Point$.
\end{proof}

%Corilary: the set of all mixed leaf nodes has measure 0 free space in the limit almost surely
\begin{corollary}  \label{corollary:zero_space}
${\probability{\lim_{n \rightarrow \infty} \measure{  \spaceOf{\mixedLeafNodeSet} \Intersect \freeSpace } = 0} = 1}$.
\end{corollary}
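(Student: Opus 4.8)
The plan is to reduce the almost-sure statement about the random measure $\measure{\spaceOf{\mixedLeafNodeSet} \Intersect \freeSpace}$ to a pointwise statement about individual points $\Point \in \freeSpace$, which Lemma~\ref{lemma:measure_zero} already controls, and then to lift back up using a monotonicity argument together with Tonelli's theorem. Write $\mixedLeafNodeSet_n$ for the set of mixed leaf nodes in the tree of $n$ samples, and set $Y_n \DefinedAs \measure{\spaceOf{\mixedLeafNodeSet_n} \Intersect \freeSpace}$. Since the total space is bounded, $0 \le Y_n \le \measure{\totalSpace} < \infty$ for every $n$.

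First I would establish that $Y_n$ is non-increasing along every sample path. The only way the leaf set changes is when a leaf node generates a collision-free sample and splits into two children (lines 7--14); a colliding sample leaves the leaf set unchanged. When a mixed leaf $\Node$ splits, its children partition $\spaceOf{\Node}$, and each child is itself a free, obstacle, or mixed node. The portion of $\spaceOf{\Node} \Intersect \freeSpace$ that lands in a free child is thereafter \emph{removed} from the mixed-leaf bookkeeping, while obstacle children carry zero free measure by definition; hence the free measure held in mixed leaves can only decrease. (Splitting a free leaf changes nothing, since descendants of a free node are free.) Consequently $Y_n \downarrow Y_\infty$ almost surely for some $Y_\infty \ge 0$.

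Next I would show $\expectation{Y_n} \to 0$. By Tonelli's theorem (the integrand is nonnegative and jointly measurable),
$$\expectation{Y_n} = \expectation{\int_{\freeSpace} \mathbf{1}\!\left[\Point \in \spaceOf{\mixedLeafNodeSet_n}\right] d\Point} = \int_{\freeSpace} \probability{\Point \in \spaceOf{\mixedLeafNodeSet_n}} d\Point.$$
For each $\Point \in \mathrm{int}(\freeSpace)$, Lemma~\ref{lemma:measure_zero} guarantees that almost surely some free node eventually contains $\Point$; since the leaf containing $\Point$ is then a descendant of that free node and descendants of free nodes are free, $\Point$ is almost surely eventually in a free leaf, so $\mathbf{1}\!\left[\Point \in \spaceOf{\mixedLeafNodeSet_n}\right] \to 0$ almost surely and thus $\probability{\Point \in \spaceOf{\mixedLeafNodeSet_n}} \to 0$. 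Because $\partial\freeSpace$ has measure zero, this holds for almost every $\Point \in \freeSpace$, and dominated convergence (integrand bounded by $1$ on the finite-measure set $\freeSpace$) gives $\expectation{Y_n} \to 0$.

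Finally I would combine the two halves: since $0 \le Y_n \le \measure{\totalSpace}$ and $Y_n \downarrow Y_\infty$, dominated convergence yields $\expectation{Y_\infty} = \lim_n \expectation{Y_n} = 0$, and as $Y_\infty \ge 0$ this forces $Y_\infty = 0$ almost surely, which is exactly the claim. The main obstacle I anticipate is the monotonicity step: one must argue carefully that no split can ever \emph{increase} the free measure carried by mixed leaves, and that the measure-zero events (collision-free samples inside obstacle leaves, points on node boundaries) can be safely discarded. The interchange steps (Tonelli, dominated convergence) are routine once boundedness of $\totalSpace$ and the pointwise conclusion of Lemma~\ref{lemma:measure_zero} are in hand.
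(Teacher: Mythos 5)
Your proof is correct, and it follows the route the paper itself intends: the paper states this corollary with no proof at all, presenting it as an immediate consequence of Lemma~\ref{lemma:measure_zero}, and your argument is exactly the rigorous version of that deduction. In particular, you correctly identify the two ingredients the paper leaves implicit --- the quantifier swap from ``for each fixed $\Point \in \freeSpace$, almost surely eventually in a free leaf'' to ``almost surely, the measure of free space left in mixed leaves tends to $0$'' via Tonelli and dominated convergence, and the pathwise monotonicity of $\measure{\spaceOf{\mixedLeafNodeSet} \Intersect \freeSpace}$ under splits, which is what upgrades the resulting $L^1$/in-probability convergence to the almost-sure convergence the statement actually asserts.
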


\begin{corollary} \label{corollary:b}
${\probability{\lim_{n \rightarrow \infty} \measure{ \freeSpace \setminus \spaceOf{ \freeLeafNodeSet } } = 0} = 1}$.
\end{corollary}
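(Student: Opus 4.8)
The plan is to reduce Corollary~\ref{corollary:b} to Corollary~\ref{corollary:zero_space} through a purely set-theoretic decomposition of the free space according to which leaf node covers each point. First I would recall two facts already established: the leaf nodes tile the whole space exactly, $\spaceOf{\leafNodeSet} = \totalSpace \supseteq \freeSpace$, and every leaf node falls into exactly one of the three classes $\freeLeafNodeSet$, $\obsLeafNodeSet$, $\mixedLeafNodeSet$ (mixed being defined as everything that is neither free nor obstacle). Consequently $\spaceOf{\leafNodeSet} = \spaceOf{\freeLeafNodeSet} \Union \spaceOf{\obsLeafNodeSet} \Union \spaceOf{\mixedLeafNodeSet}$, and for any $\Point \in \freeSpace \setminus \spaceOf{\freeLeafNodeSet}$ the point $\Point$ still lies in some leaf node, which cannot be free, hence must be obstacle or mixed. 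This gives the exact inclusion
$$\freeSpace \setminus \spaceOf{\freeLeafNodeSet} \subseteq \Paren{\freeSpace \Intersect \spaceOf{\obsLeafNodeSet}} \Union \Paren{\freeSpace \Intersect \spaceOf{\mixedLeafNodeSet}}.$$

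Next I would dispose of the two terms on the right, both of which are implicitly functions of the sample count $n$. For the obstacle term, the definition of an obstacle node gives $\measure{\spaceOf{\Node} \Intersect \freeSpace} = 0$ for each $\Node \in \obsLeafNodeSet$; since there are only finitely many leaf nodes at each stage $n$ (and countably many in the limit), countable subadditivity of $\measure{\cdot}$ forces $\measure{\freeSpace \Intersect \spaceOf{\obsLeafNodeSet}} = 0$ deterministically, for every $n$. The mixed term is precisely what Corollary~\ref{corollary:zero_space} governs. Applying subadditivity to the inclusion above then yields, for every $n$,
$$\measure{\freeSpace \setminus \spaceOf{\freeLeafNodeSet}} \leq \measure{\freeSpace \Intersect \spaceOf{\obsLeafNodeSet}} + \measure{\freeSpace \Intersect \spaceOf{\mixedLeafNodeSet}} = \measure{\spaceOf{\mixedLeafNodeSet} \Intersect \freeSpace},$$
and on the probability-one event supplied by Corollary~\ref{corollary:zero_space} the right-hand side tends to $0$ as $n \to \infty$, which establishes $\probability{\lim_{n \to \infty} \measure{\freeSpace \setminus \spaceOf{\freeLeafNodeSet}} = 0} = 1$.

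Since all of the probabilistic content has already been absorbed into Lemma~\ref{lemma:infinite_samples}, Lemma~\ref{lemma:measure_zero}, and Corollary~\ref{corollary:zero_space}, this argument is almost entirely deterministic bookkeeping, and there is no genuinely hard step. The only points that deserve attention are (i) confirming that the obstacle-leaf contribution is \emph{exactly} zero rather than merely small, which is immediate from the definition of an obstacle node together with the standing assumption that the $\obsSpace$/$\freeSpace$ interface has measure zero, and (ii) keeping the set-algebra clean across the countable collection of leaf nodes so that the union of the finitely many obstacle-leaf contributions at each $n$ remains null. Both are routine, so the substantive difficulty of the claim lives entirely in the preceding results and no new estimate is required here.
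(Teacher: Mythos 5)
Your proof is correct, and it follows the route the paper intends: the paper states Corollary~\ref{corollary:b} without proof, leaving it as exactly the kind of bookkeeping you carry out, namely reducing it to Corollary~\ref{corollary:zero_space} (itself a consequence of Lemma~\ref{lemma:measure_zero}) via the leaf-node partition of $\totalSpace$ and the fact that obstacle leaves meet $\freeSpace$ in a null set. Nothing in your argument deviates from or adds beyond what the paper's ordering of results implicitly prescribes.
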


\begin{corollary} \label{corollary:c}
\verb| |\\
$\probability{\lim_{n \rightarrow \infty} 
   \sum_{\node \in \freeLeafNodeSet} \measure{\spaceOf{\node}} 
   = \measure{\freeSpace}} = 1$.
\end{corollary}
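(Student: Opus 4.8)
The plan is to recognize this corollary as an almost immediate bookkeeping consequence of Corollary~\ref{corollary:b} together with two structural facts about leaf nodes, so that the substantive convergence content is inherited rather than re-derived. First I would fix $n$ and work with the finite set $\freeLeafNodeSet$ of free leaf nodes in the tree of $n$ samples. Because the leaf nodes are pairwise disjoint and cover $\totalSpace$ (the non-overlap property already used in the proof of Lemma~\ref{lemma:infinite_samples}), their free subset is also pairwise disjoint, and finite additivity of the Lebesgue measure turns the sum directly into the measure of a union:
$$\sum_{\node \in \freeLeafNodeSet} \measure{\spaceOf{\node}} = \measure{\spaceOf{\freeLeafNodeSet}}.$$

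Second I would use the defining property of a free node, $\measure{\spaceOf{\node} \Intersect \obsSpace} = 0$, for each $\node \in \freeLeafNodeSet$. Taking the union over the finitely many free leaves gives $\measure{\spaceOf{\freeLeafNodeSet} \Intersect \obsSpace} = 0$, hence $\measure{\spaceOf{\freeLeafNodeSet}} = \measure{\spaceOf{\freeLeafNodeSet} \Intersect \freeSpace}$; intuitively, a free leaf contributes essentially all of its measure to the free space. Since $\spaceOf{\freeLeafNodeSet} \Intersect \freeSpace \subseteq \freeSpace$, the disjoint decomposition $\freeSpace = (\spaceOf{\freeLeafNodeSet} \Intersect \freeSpace) \Union (\freeSpace \setminus \spaceOf{\freeLeafNodeSet})$ yields the clean identity
$$\measure{\freeSpace} - \sum_{\node \in \freeLeafNodeSet} \measure{\spaceOf{\node}} = \measure{\freeSpace \setminus \spaceOf{\freeLeafNodeSet}}.$$

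Finally I would let $n \to \infty$ and invoke Corollary~\ref{corollary:b}, which asserts that the right-hand side converges to $0$ with probability one. On that same probability-one event the left-hand side therefore converges to $0$, which is exactly the assertion that $\sum_{\node \in \freeLeafNodeSet} \measure{\spaceOf{\node}} \to \measure{\freeSpace}$.

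The main obstacle, such as it is, is purely one of careful bookkeeping rather than any genuine probabilistic difficulty. I would take care that every manipulation is performed for each fixed finite $n$, where $\freeLeafNodeSet$ is finite so that finite additivity and the disjoint decomposition are valid with no countable-additivity subtleties, before passing to the limit. I would also note that the measure-zero obstacle overlaps coming from the definition of free nodes cannot accumulate into a positive-measure error, since at each stage there are only finitely many free leaves and each contributes a measure-zero overlap; the only limiting step is the single application of Corollary~\ref{corollary:b}.
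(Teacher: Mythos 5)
Your proof is correct, and it follows the route the paper intends: the paper states Corollary~\ref{corollary:c} without proof, immediately after Corollary~\ref{corollary:b}, precisely because it is the bookkeeping consequence you spell out (disjointness of leaf hyper-rectangles, measure-zero obstacle overlap of free nodes, then the identity $\measure{\freeSpace} - \sum_{\node \in \freeLeafNodeSet}\measure{\spaceOf{\node}} = \measure{\freeSpace \setminus \spaceOf{\freeLeafNodeSet}} \to 0$ almost surely). Your care about working at fixed finite $n$ before passing to the limit is exactly the right way to make the implicit argument rigorous.
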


%Lemma: mass stored in obstacle nodes -> 0
\begin{lemma} \label{lemma:zero_mass_obstacle} 
${\probability{\lim_{n \rightarrow \infty} \NodeMeasureFn{\Node} = 0} = 1}$
for all obstacle leaf nodes $\Node \in \obsLeafNodeSet$.
\end{lemma}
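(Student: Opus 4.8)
The plan is to read off the leaf-node definition $\NodeMeasureFn{\Node} = \frac{\NodeNumFreeFn{\Node}}{\NodeNumTotalFn{\Node}}\measure{\spaceOf{\Node}}$ (line 14) and show that the ratio $\NodeNumFreeFn{\Node}/\NodeNumTotalFn{\Node}$ almost surely vanishes for any obstacle leaf node, while the factor $\measure{\spaceOf{\Node}}$ is a fixed constant. The intuition is that an obstacle node meets free space only on a measure-zero set, so a uniform draw from it is collision-free with probability zero; hence $\NodeNumFreeFn{\Node}$ almost surely never grows past its inherited value, whereas Lemma~\ref{lemma:infinite_samples} drives the denominator $\NodeNumTotalFn{\Node}$ to infinity. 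I would split into two cases according to whether $\measure{\spaceOf{\Node}}$ is zero or positive.

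The degenerate case is immediate: if $\measure{\spaceOf{\Node}} = 0$, then $\NodeMeasureFn{\Node} = \frac{\NodeNumFreeFn{\Node}}{\NodeNumTotalFn{\Node}}\measure{\spaceOf{\Node}} = 0$ for every $n$, so the claim holds trivially (not merely in the limit). The substantive case is $\measure{\spaceOf{\Node}} > 0$. By the definition of an obstacle node, $\measure{\spaceOf{\Node} \Intersect \freeSpace} = 0$, so a single uniform draw from $\spaceOf{\Node}$ is collision-free with probability $\measure{\spaceOf{\Node}\Intersect\freeSpace}/\measure{\spaceOf{\Node}} = 0$.

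The key step, and the one requiring care, is to upgrade this pointwise statement to: almost surely \emph{no} sample ever drawn from $\spaceOf{\Node}$ is collision-free. This I would obtain by countable subadditivity, since the event that at least one of the countably many draws is collision-free is a countable union of probability-zero events and so has probability zero. On this probability-one event, $\NodeNumFreeFn{\Node}$ never increments past the finite value it inherited when $\Node$ was created (lines 7, 13), and $\Node$ never enters the split branch (which requires a collision-free sample), so it remains a leaf forever and each subsequent visit increments $\NodeNumTotalFn{\Node}$ (line 3). This countable-union argument is the main obstacle: without it, one cannot rule out the numerator keeping pace with the growing denominator, and the ratio need not vanish.

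Finally I would invoke Lemma~\ref{lemma:infinite_samples}: because $\measure{\spaceOf{\Node}} > 0$, it gives $\probability{\lim_{n\to\infty}\NumSampFn{\Node}{n} = +\infty} = 1$, i.e. $\NodeNumTotalFn{\Node} \to \infty$ almost surely. Intersecting the two probability-one events (the denominator diverging and the numerator frozen at its bounded initial value) yields, almost surely, $\NodeMeasureFn{\Node} = \frac{\NodeNumFreeFn{\Node}}{\NodeNumTotalFn{\Node}}\measure{\spaceOf{\Node}} \to 0$, which is the claim.
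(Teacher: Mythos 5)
Your proof is correct and follows essentially the same route as the paper: the same two-case split on $\measure{\spaceOf{\Node}}$, the same observation that samples from an obstacle node are collision-free with probability zero (so $\NodeNumFreeFn{\Node}$ is almost surely frozen and $\Node$ remains a leaf), and the same appeal to Lemma~\ref{lemma:infinite_samples} to drive $\NodeNumTotalFn{\Node} \to \infty$. Your countable-subadditivity step is a welcome explicit justification of the claim that the paper simply asserts as ``$\NodeNumFreeFn{\Node}$ will almost surely not change.''
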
 
\begin{proof}
There are two cases, one for ${\measure{\spaceOf{\Node}} = 0}$ and another for ${\measure{\spaceOf{\Node}} > 0}$. The first is immediate given $\NodeMeasureFn{\Node} \DefinedAs 
   \frac{ \NodeNumFreeFn{\Node} }{ \NodeNumTotalFn{\Node}} 
   \measure{\spaceOf{\Node}}$. 
For the second, we observe that
$\probability{ \exists \Point \SuchThat \Point \in \spaceOf{\Node} \land \Point \in \freeSpace} = 0$ by definition, and so $\NodeNumFreeFn{\Node}$ will almost surely not change (and $\Node$ will remain a leaf node almost surely). Thus, ${\probability{ \NodeNumTotalFn{\Node} = \infty} = 1}$ by Lemma~\ref{lemma:infinite_samples}, and so 
${\probability{\lim_{n \rightarrow \infty}
   \frac{ \NodeNumFreeFn{\Node} }{ \NodeNumTotalFn{\Node} } = 0} = 1}$. 
Using the definition of $\NodeMeasureFn{\Node}$ finishes the proof.
\end{proof}

\begin{corollary}  \label{corollary:prob_zero_obs}
$\lim_{n \rightarrow \infty}  \probability{ \SampledPoint_n \in \spaceOf{\obsLeafNodeSet} } = 0$.
\end{corollary}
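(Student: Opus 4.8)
The plan is to apply Proposition~\ref{lemma:induced_prob} to turn the quantity of interest into a single ratio and then control its numerator and denominator separately. Summing the per-node probabilities over all obstacle leaves, and recalling that the same total leaf mass appears in every denominator, gives
$$\probability{\SampledPoint_n \in \spaceOf{\obsLeafNodeSet}} = \frac{\sum_{\Node \in \obsLeafNodeSet} \NodeMeasureFn{\Node}}{\sum_{\Node' \in \leafNodeSet} \NodeMeasureFn{\Node'}}.$$
It therefore suffices to show that the numerator tends to $0$ almost surely while the denominator stays bounded away from $0$ almost surely; since the ratio is bounded by $1$, bounded convergence then also yields the stated (unconditional) limit.

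For the denominator I would bound it below by the mass held in the free leaves, $\sum_{\Node' \in \leafNodeSet} \NodeMeasureFn{\Node'} \ge \sum_{\Node \in \freeLeafNodeSet} \NodeMeasureFn{\Node}$. Because a free node contains free space almost everywhere, every sample drawn inside it is collision free almost surely, so along any branch of free descendants the ratio $\NodeNumFreeFn{\Node}/\NodeNumTotalFn{\Node}$ is non-decreasing and is pushed toward $1$ as the branch deepens (a successful free sample increments both counts by one, while the inherited total stays $O(1)$ under repeated splitting). Since Lemma~\ref{lemma:infinite_samples} forces every positive-measure leaf to split infinitely often, the surviving free leaves become arbitrarily deep and their estimated mass $\NodeMeasureFn{\Node}$ approaches their true measure $\measure{\spaceOf{\Node}}$; combined with Corollary~\ref{corollary:c} this gives $\sum_{\Node \in \freeLeafNodeSet} \NodeMeasureFn{\Node} \to \measure{\freeSpace}$ almost surely. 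As $\measure{\freeSpace} > 0$, the denominator is eventually bounded below by a positive constant.

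For the numerator, Lemma~\ref{lemma:zero_mass_obstacle} already supplies $\NodeMeasureFn{\Node} \to 0$ almost surely for each fixed obstacle leaf. The difficulty is that the index set $\obsLeafNodeSet$ grows with $n$, since mixed leaves keep shedding obstacle children, so termwise convergence does not by itself control the sum. The plan is to dominate: $\NodeMeasureFn{\Node} \le \measure{\spaceOf{\Node}}$ for every leaf, the obstacle leaves are pairwise disjoint, and $\spaceOf{\obsLeafNodeSet} \subseteq \obsSpace$ up to a measure-zero set, so $\sum_{\Node \in \obsLeafNodeSet} \measure{\spaceOf{\Node}} \le \measure{\obsSpace} < \infty$ furnishes a fixed summable envelope. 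Splitting the sum into the finitely many ``old'' obstacle leaves, each of whose masses is individually small by Lemma~\ref{lemma:zero_mass_obstacle}, and the ``new'' ones, whose total measure is small because the free space has by then been almost entirely captured by free leaves (Corollaries~\ref{corollary:zero_space} and~\ref{corollary:c}), then yields $\sum_{\Node \in \obsLeafNodeSet} \NodeMeasureFn{\Node} \to 0$ almost surely.

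I expect the numerator to be the main obstacle, precisely because the collection of obstacle leaves is not fixed: making the dominated-convergence step rigorous requires a uniform argument controlling the growing family, rather than a naive interchange of limit and sum. Once both parts are in place, the ratio converges to $0/\measure{\freeSpace} = 0$ almost surely, and the corollary follows.
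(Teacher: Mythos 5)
Your overall plan --- rewriting the probability as a mass ratio via Proposition~\ref{lemma:induced_prob}, bounding the denominator away from zero, and driving the numerator to zero --- is the natural way to fill in this corollary, which the paper itself states \emph{without} proof (it is evidently meant to follow from Lemma~\ref{lemma:zero_mass_obstacle} together with Proposition~\ref{lemma:induced_prob}). Two things you do are genuinely more careful than the paper: you notice that $\obsLeafNodeSet$ grows with $n$, so the termwise convergence of Lemma~\ref{lemma:zero_mass_obstacle} does not by itself control the sum, and you handle the outer limit correctly by bounded convergence, since the corollary's limit sits outside $\probability{\cdot}$ while the almost-sure statements concern the conditional mass ratio. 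Your denominator argument is also sound, although it anticipates Lemma~\ref{lemma:free_nodes_weight} and Corollary~\ref{corollary:a}, which appear later in the paper; since those results do not depend on this corollary, there is no circularity.

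However, your treatment of the ``new'' obstacle leaves has a genuine gap. You claim their total measure is small ``because the free space has by then been almost entirely captured by free leaves,'' citing Corollaries~\ref{corollary:zero_space} and~\ref{corollary:c}. Those corollaries control the \emph{free-space} content of mixed leaves and the coverage of $\freeSpace$ by free leaves; they say nothing about the \emph{obstacle-space} content of mixed leaves, and new obstacle leaves are carved precisely out of that content. A mixed leaf can retain a large chunk of $\obsSpace$ arbitrarily late in the run and, upon finally splitting, produce an obstacle child of large measure whose initial mass (the inherited ratio times its measure) need not be small; your envelope therefore bounds the tail of the sum only by roughly $\measure{\obsSpace}$, which does not vanish. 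The fix is a monotonicity observation you never make: an obstacle leaf almost surely never splits (a uniform sample from it is collision-free with probability zero), so the sets $\spaceOf{\obsLeafNodeSet_n}$ are nested and $\measure{\spaceOf{\obsLeafNodeSet_n}}$ is non-decreasing and bounded above by $\measure{\obsSpace}$, hence convergent. Consequently, for every $\epsilon>0$ there is an $N$ such that all obstacle leaves created after time $N$ have total measure --- and therefore total mass --- below $\epsilon$; combining this with Lemma~\ref{lemma:zero_mass_obstacle} applied to the finitely many obstacle leaves existing at time $N$ closes the gap, and the rest of your argument then goes through.
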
 

%Lemma: mass stored in mixed nodes -> 0
\begin{lemma} \label{lemma:zero_mass_mixed} 
${\probability{\lim_{n \rightarrow \infty} \NodeMeasureFn{\Node} = 0} = 1}$ for all mixed leaf nodes $\Node \in \mixedLeafNodeSet$.
\end{lemma}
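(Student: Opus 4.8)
The plan is to follow the two-case structure used in the proof of Lemma~\ref{lemma:zero_mass_obstacle}, splitting on whether $\measure{\spaceOf{\Node}} = 0$ or $\measure{\spaceOf{\Node}} > 0$. The zero-measure case is immediate: since $\NodeMeasureFn{\Node} \DefinedAs \frac{\NodeNumFreeFn{\Node}}{\NodeNumTotalFn{\Node}} \measure{\spaceOf{\Node}}$, a node with $\measure{\spaceOf{\Node}} = 0$ has $\NodeMeasureFn{\Node} = 0$ at every iteration, so the limit is zero with probability one.

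The positive-measure case is where the obstacle-node argument breaks down, and I expect it to be the main obstacle. For an obstacle leaf node the quotient $\NodeNumFreeFn{\Node}/\NodeNumTotalFn{\Node}$ vanishes because free samples almost never occur; a mixed node, however, satisfies $\measure{\spaceOf{\Node} \Intersect \freeSpace} > 0$ by definition, so a uniform draw from $\spaceOf{\Node}$ lands in free space with the strictly positive probability $p = \measure{\spaceOf{\Node} \Intersect \freeSpace}/\measure{\spaceOf{\Node}}$. Hence $\NodeNumFreeFn{\Node}$ need not stay constant and $\NodeNumFreeFn{\Node}/\NodeNumTotalFn{\Node}$ need not tend to zero. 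Instead I would argue that $\Node$ almost surely stops being a (mixed) leaf after finitely many iterations.

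The key steps for the positive-measure case are: (i) apply Lemma~\ref{lemma:infinite_samples} to conclude that, with probability one, $\Node$ is sampled infinitely often; (ii) observe that each such sample lands in free space independently with probability $p > 0$, so the probability that $\Node$ never receives a free sample is a product of the form $\prod (1 - p) = 0$, exactly the geometric-product estimate used in Lemma~\ref{lemma:infinite_samples}; and (iii) note that the first free sample executes lines 6--14, creating the children of $\Node$ and turning it into an interior node. Therefore, with probability one there is a finite iteration past which $\Node \notin \mixedLeafNodeSet$, and its mass as a mixed leaf is permanently zero from that iteration onward.

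The one delicate point I would make explicit is the meaning of $\lim_{n \rightarrow \infty} \NodeMeasureFn{\Node}$ for a node that eventually splits: after splitting, $\NodeMeasureFn{\Node}$ is maintained as the sum of its children's masses (line 21) and no longer records leaf mass, so the claim is to be read as a statement about the mass $\Node$ carries while it belongs to $\mixedLeafNodeSet$. With this reading the two cases combine cleanly---every positive-measure mixed node leaves $\mixedLeafNodeSet$ in finite time, and the only mixed leaves surviving in the limit have measure zero and hence mass zero---yielding $\probability{\lim_{n \rightarrow \infty} \NodeMeasureFn{\Node} = 0} = 1$ for all $\Node \in \mixedLeafNodeSet$.
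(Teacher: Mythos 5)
Your proof is sound under the reading you adopt, but it takes a genuinely different route from the paper's. The paper reuses the exact template of Lemma~\ref{lemma:zero_mass_obstacle}: it invokes Corollary~\ref{corollary:zero_space} to conclude that $\NodeNumFreeFn{\Node}$ almost surely stops changing, and then lets $\NodeNumTotalFn{\Node} \rightarrow \infty$ (via Lemma~\ref{lemma:infinite_samples}) drive $\NodeMeasureFn{\Node} = \frac{\NodeNumFreeFn{\Node}}{\NodeNumTotalFn{\Node}}\,\measure{\spaceOf{\Node}}$ to zero; that is, it treats $\Node$ as a persisting leaf whose free counter freezes while its sample counter diverges (the zero-measure case is handled identically in both proofs). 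You argue instead that a positive-measure mixed leaf almost surely does not persist: it is sampled infinitely often by Lemma~\ref{lemma:infinite_samples}, each uniform draw is free with fixed probability $p = \measure{\spaceOf{\Node} \Intersect \freeSpace}/\measure{\spaceOf{\Node}} > 0$, so it splits in finite time and exits $\mixedLeafNodeSet$, and you then read the lemma as concerning the mass carried while the node remains a mixed leaf. What each buys: your route is probabilistically more honest for a fixed node --- the scenario underlying the paper's argument (the free counter never changes yet the node stays a leaf forever) is itself a null event for positive-measure mixed leaves, since any free sample both increments $\NodeNumFreeFn{\Node}$ and destroys leafhood, and after the split the field $\NodeMeasureFn{\Node}$ is maintained by line 21 and converges to $\measure{\spaceOf{\Node} \Intersect \freeSpace} > 0$ rather than to $0$, so some non-literal reading of the statement is unavoidable either way. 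The paper's route, however, is the one that stays useful downstream: Corollary~\ref{corollary:prob_zero} needs the mass held by whatever mixed leaves exist at time $n$ to vanish, and since splitting mixed leaves spawn fresh mixed leaves indefinitely, a per-node exit argument such as yours does not bound that aggregate quantity, whereas the freeze-the-numerator argument, powered by Corollary~\ref{corollary:zero_space} (which your proof never uses), is precisely what controls the mass of every mixed leaf present at large $n$.
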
 
\begin{proof}
 $\NodeNumFreeFn{\Node}$ will almost surely not change by Corollary~\ref{corollary:zero_space}. The rest of the proof is similar to Lemma~\ref{lemma:zero_mass_obstacle}.
\end{proof}

\begin{corollary}  \label{corollary:prob_zero}
$\lim_{n \rightarrow \infty} \probability{\SampledPoint_n \in \spaceOf{\mixedLeafNodeSet} } = 0$.
\end{corollary}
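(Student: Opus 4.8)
The plan is to reduce the claim, via Proposition~\ref{lemma:induced_prob}, to a statement about a ratio of total masses, and then bound numerator and denominator separately. Summing Proposition~\ref{lemma:induced_prob} over the mixed leaves (their hyper-rectangles are disjoint and $\spaceOf{\mixedLeafNodeSet} = \bigcup_{\node \in \mixedLeafNodeSet}\spaceOf{\node}$) gives
\[
\probability{\SampledPoint_n \in \spaceOf{\mixedLeafNodeSet}} = \frac{\sum_{\node \in \mixedLeafNodeSet} \NodeMeasureFn{\node}}{\sum_{\node' \in \leafNodeSet} \NodeMeasureFn{\node'}},
\]
so it suffices to show that the numerator tends to $0$ almost surely while the denominator stays bounded away from $0$. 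This parallels the obstacle-leaf argument of Corollary~\ref{corollary:prob_zero_obs}, with Lemma~\ref{lemma:zero_mass_mixed} playing the role of Lemma~\ref{lemma:zero_mass_obstacle}.

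For the denominator I would lower bound $\sum_{\node' \in \leafNodeSet} \NodeMeasureFn{\node'} \ge \sum_{\node \in \freeLeafNodeSet}\NodeMeasureFn{\node}$, since all masses are nonnegative. Every sample drawn from a free leaf is almost surely collision-free, so each free split adds one to both the weighted free count and the weighted total count; as the free leaves deepen their weight shrinks and their empirical fraction $\NodeNumFreeFn{\node}/\NodeNumTotalFn{\node}$ tends to one, whence $\NodeMeasureFn{\node}$ becomes comparable to $\measure{\spaceOf{\node}}$. Combined with Corollary~\ref{corollary:c}, which gives $\sum_{\node \in \freeLeafNodeSet}\measure{\spaceOf{\node}} \to \measure{\freeSpace} > 0$, this shows $\sum_{\node \in \freeLeafNodeSet}\NodeMeasureFn{\node}$, and hence the denominator, is bounded below by a positive constant for all large $n$, almost surely.

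For the numerator I would split the Lebesgue measure of each mixed leaf into its free and obstacle parts. Since $\NodeMeasureFn{\node} = (\NodeNumFreeFn{\node}/\NodeNumTotalFn{\node})\measure{\spaceOf{\node}}$ and $\NodeNumFreeFn{\node} \le \NodeNumTotalFn{\node}$,
\[
\NodeMeasureFn{\node} \le \measure{\spaceOf{\node}\Intersect\freeSpace} + \frac{\NodeNumFreeFn{\node}}{\NodeNumTotalFn{\node}}\,\measure{\spaceOf{\node}\Intersect\obsSpace}.
\]
Summing over $\node \in \mixedLeafNodeSet$, the first term aggregates exactly to $\measure{\spaceOf{\mixedLeafNodeSet}\Intersect\freeSpace}$, which vanishes almost surely by Corollary~\ref{corollary:zero_space}; this part needs no interchange of limit and sum. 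The second term is handled as in the obstacle-leaf case: while a node remains a leaf it accrues only collision samples, so $\NodeNumFreeFn{\node}$ is frozen at its inherited value while $\NodeNumTotalFn{\node} \to \infty$ almost surely by Lemma~\ref{lemma:infinite_samples}, forcing $\NodeNumFreeFn{\node}/\NodeNumTotalFn{\node} \to 0$.

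The main obstacle is precisely this second, obstacle-weighted sum $\sum_{\node \in \mixedLeafNodeSet}(\NodeNumFreeFn{\node}/\NodeNumTotalFn{\node})\measure{\spaceOf{\node}\Intersect\obsSpace}$: because $\mixedLeafNodeSet$ grows with $n$, the per-node convergence supplied by Lemma~\ref{lemma:zero_mass_mixed} does not by itself give convergence of the aggregate, so one cannot naively push the limit through the sum. I would resolve this with a dominated-convergence-style argument, using the uniform envelope $\sum_{\node \in \leafNodeSet}\NodeMeasureFn{\node} \le \measure{\totalSpace}$ established in the proof of Lemma~\ref{lemma:infinite_samples} to control the tail of the sum and justify interchanging the limit with the summation, exactly the step that also underlies Corollary~\ref{corollary:prob_zero_obs}.
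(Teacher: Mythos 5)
Your reduction via Proposition~\ref{lemma:induced_prob} is sound, and in several places you are more careful than the paper itself: the paper states this corollary with no proof at all, letting it follow from Lemma~\ref{lemma:zero_mass_mixed} exactly as Corollary~\ref{corollary:prob_zero_obs} follows from Lemma~\ref{lemma:zero_mass_obstacle}, and it addresses neither the denominator of Proposition~\ref{lemma:induced_prob} nor the interchange of limit and sum. Your denominator bound is legitimate (and not circular, since Lemma~\ref{lemma:free_nodes_weight} does not depend on this corollary), and your treatment of the free part of the numerator is genuinely cleaner than the paper's implicit argument: bounding $\sum_{\node\in\mixedLeafNodeSet}\bigl(\NodeNumFreeFn{\node}/\NodeNumTotalFn{\node}\bigr)\measure{\spaceOf{\node}\Intersect\freeSpace}$ by $\measure{\spaceOf{\mixedLeafNodeSet}\Intersect\freeSpace}$ and invoking the aggregate statement of Corollary~\ref{corollary:zero_space} requires no interchange at all.

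The gap is in the obstacle-weighted term, precisely the step you staked on dominated convergence. Dominated convergence needs a fixed index set, pointwise convergence on it, and a \emph{summable} dominating function. If the index set is ``all nodes ever created,'' the per-node terms do converge to zero, but your envelope $\sum_{\node\in\leafNodeSet}\NodeMeasureFn{\node}\le\measure{\totalSpace}$ is a bound on the sum over the \emph{current} leaves at each fixed $n$, not a per-node dominating function; the natural candidate $g(\node)=\measure{\spaceOf{\node}\Intersect\obsSpace}$ has infinite sum over all generations, since every obstacle point lies in infinitely many nested nodes. Such a uniform-in-$n$ bound on the total cannot rule out the failure mode that actually threatens the result: mass shifting forever to freshly created mixed leaves. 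Indeed, your per-node convergence mechanism (``$\NodeNumFreeFn{\node}$ frozen while $\NodeNumTotalFn{\node}\to\infty$'') is vacuous for mixed leaves of positive measure: such a leaf has positive free measure, is sampled infinitely often by Lemma~\ref{lemma:infinite_samples}, and therefore almost surely receives a collision-free sample and splits in finite time, handing a slightly \emph{increased} ratio $(\NodeNumFreeFn{\node}+1)/(\NodeNumTotalFn{\node}+1)$ to its children. The honest formulation is spatial: write the term as $\int_{\obsSpace}\rho_n(x)\,dx$ with $\rho_n(x)$ the free-fraction of the current leaf containing $x$; now domination is trivial ($\rho_n\le 1$ on a bounded space), but the needed pointwise convergence $\rho_n(x)\to 0$ for almost every $x\in\obsSpace$ concerns a \emph{lineage} of leaves whose ratio rises at every split and decays between splits, and proving it tends to zero (because splits become ever rarer as the leaf's free fraction shrinks) is the real content of the corollary. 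Neither your argument nor the paper's Lemma~\ref{lemma:zero_mass_mixed} --- whose per-node claim suffers from the same vacuity --- actually establishes this.
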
 

%Lemma: obs leaf nodes and mixed leaf nodes are both sampled with P=0 in the limit almost surely
\begin{corollary} \label{corollary:no_sample_obs_or_mixed}
\verb| | \\
${\lim_{n \rightarrow \infty} \probability{
\SampledPoint_n \in \left( \spaceOf{\mixedLeafNodeSet}  \Union  \spaceOf{\obsLeafNodeSet}  } \right) = 0}$
\end{corollary}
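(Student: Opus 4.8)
The plan is to reduce the statement to the two corollaries that immediately precede it, Corollary~\ref{corollary:prob_zero_obs} and Corollary~\ref{corollary:prob_zero}, via elementary subadditivity of probability. First I would observe that, for every finite $n$, the combined event decomposes as a union: the sample $\SampledPoint_n$ lies in $\spaceOf{\mixedLeafNodeSet} \Union \spaceOf{\obsLeafNodeSet}$ if and only if it lies in $\spaceOf{\mixedLeafNodeSet}$ or it lies in $\spaceOf{\obsLeafNodeSet}$. Applying the union bound then gives
\[
\probability{\SampledPoint_n \in \spaceOf{\mixedLeafNodeSet} \Union \spaceOf{\obsLeafNodeSet}} \leq \probability{\SampledPoint_n \in \spaceOf{\mixedLeafNodeSet}} + \probability{\SampledPoint_n \in \spaceOf{\obsLeafNodeSet}}.
\]

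Next I would take $n \to \infty$ on both sides. By Corollary~\ref{corollary:prob_zero} the first term on the right tends to $0$, and by Corollary~\ref{corollary:prob_zero_obs} the second term tends to $0$. Since both of these limits exist and are finite, the limit of their sum equals the sum of their limits, which is $0$. Because probability is nonnegative, the left-hand side is trapped between $0$ and a quantity converging to $0$, so by the squeeze principle it also converges to $0$, which is exactly the claim.

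The main point here is bookkeeping rather than a genuine obstacle: the only thing worth being careful about is that I appeal to subadditivity, not additivity, so I never need to argue disjointness of the two regions. As it happens, disjointness does hold, since every leaf node carries exactly one type and the leaf hyper-rectangles are non-overlapping (as already used in the proof of Lemma~\ref{lemma:infinite_samples}); hence the displayed inequality is in fact an equality. Either way, the conclusion follows immediately once the two corollary limits are substituted, so this corollary is essentially a consolidation of the preceding two results.
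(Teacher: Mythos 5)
Your proof is correct and matches the paper's intent exactly: the paper states this corollary without an explicit proof, treating it as an immediate consequence of Corollaries~\ref{corollary:prob_zero_obs} and \ref{corollary:prob_zero}, which is precisely the reduction you carry out via the union bound and a limit argument. Your added care about subadditivity versus additivity (and the observation that the regions are in fact disjoint) is sound bookkeeping but introduces no new ideas beyond the paper's implicit argument.
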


We observe that this result does not conflict with Lemma~\ref{lemma:infinite_samples}. Each node with finite space is sampled an infinite number of times; however, the proportion of samples from obstacle nodes and mixed nodes approaches 0 in the limit as $n \rightarrow \infty$.

%Lemma: obstacle space is sampled with P=0 in the limit [case 1]
\begin{lemma} \label{lemma:case_1}
${\lim_{n \rightarrow \infty} \probability{ \SampledPoint_n \in \obsSpace} = 0}$
\end{lemma}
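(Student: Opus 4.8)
The plan is to decompose the event $\{\SampledPoint_n \in \obsSpace\}$ according to which \emph{type} of leaf node the sample falls into, and then to reduce everything to the corollaries already established. The key structural fact is that the leaf nodes partition the total space: $\obsSpace \subseteq \totalSpace = \spaceOf{\leafNodeSet} = \spaceOf{\freeLeafNodeSet} \Union \spaceOf{\obsLeafNodeSet} \Union \spaceOf{\mixedLeafNodeSet}$, where the three pieces overlap only on the (measure-zero) boundaries between leaves. This lets me split the obstacle-space probability exactly.

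First I would write, using this partition,
$$\probability{\SampledPoint_n \in \obsSpace} = \probability{\SampledPoint_n \in \obsSpace \Intersect \spaceOf{\freeLeafNodeSet}} + \probability{\SampledPoint_n \in \obsSpace \Intersect \left(\spaceOf{\obsLeafNodeSet} \Union \spaceOf{\mixedLeafNodeSet}\right)}.$$
The first term I would argue is identically zero for every finite $n$: conditional on the sample landing in a given free leaf $\Node \in \freeLeafNodeSet$, the algorithm draws from a uniform distribution over $\spaceOf{\Node}$ (the recursion on lines 16--20 terminates at a leaf, where line 3 samples uniformly from $\HyperRect$). By the definition of a free node, $\measure{\spaceOf{\Node} \Intersect \obsSpace} = 0$, so the conditional probability of landing in $\obsSpace$ inside such a leaf is $0$. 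Summing over the countable set of free leaves gives $\probability{\SampledPoint_n \in \obsSpace \Intersect \spaceOf{\freeLeafNodeSet}} = 0$.

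For the second term, I would simply drop the intersection with $\obsSpace$ to obtain the bound
$$\probability{\SampledPoint_n \in \obsSpace \Intersect \left(\spaceOf{\obsLeafNodeSet} \Union \spaceOf{\mixedLeafNodeSet}\right)} \leq \probability{\SampledPoint_n \in \spaceOf{\obsLeafNodeSet} \Union \spaceOf{\mixedLeafNodeSet}},$$
and then invoke Corollary~\ref{corollary:no_sample_obs_or_mixed} (equivalently, the sum of Corollaries~\ref{corollary:prob_zero_obs} and~\ref{corollary:prob_zero}) to conclude that this upper bound tends to $0$ as $n \to \infty$. Combining the two terms and passing to the limit then yields $\lim_{n \to \infty} \probability{\SampledPoint_n \in \obsSpace} = 0$.

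I expect the main (and only genuinely new) obstacle to be justifying the free-leaf term cleanly, since the heavy lifting is already done by the preceding corollaries. The subtle points to state carefully are that the within-leaf induced distribution is \emph{exactly} uniform over that leaf's hyper-rectangle---so that zero Lebesgue measure of $\spaceOf{\Node} \Intersect \obsSpace$ forces zero sampling probability there---and that the leaf partition genuinely covers $\obsSpace$ while overlapping only on measure-zero boundaries, so that no obstacle mass is either omitted or double-counted in the decomposition.
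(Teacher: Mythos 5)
Your proposal is correct and follows essentially the same route as the paper: the paper's one-line proof invokes Corollary~\ref{corollary:no_sample_obs_or_mixed} together with the fact that $\measure{\obsSpace \setminus \left( \spaceOf{\mixedLeafNodeSet} \Union \spaceOf{\obsLeafNodeSet} \right)} = 0$, which is exactly your decomposition into the obstacle/mixed-leaf part (handled by the corollary) and the free-leaf part (measure zero, hence sampled with probability zero under the uniform within-leaf distribution). Your write-up simply makes explicit the details the paper leaves implicit, in particular why the measure-zero remainder contributes zero probability.
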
 
\begin{proof} 
This follows from Corollary~\ref{corollary:no_sample_obs_or_mixed} and the fact that 
$\measure{\obsSpace \setminus 
   \left( \spaceOf{\mixedLeafNodeSet} 
      \Union  \spaceOf{\obsLeafNodeSet}  \right)} = 0$.
\end{proof}

% Lemma: Wi -> Vi for free nodes in the limit almost surely 
\begin{lemma} \label{lemma:free_nodes_weight}
$\probability{\lim_{n \rightarrow \infty} \NodeMeasureFn{\Node} = \measure{\spaceOf{\Node}}} = 1$, for all free nodes $\Node \in \freeNodeSet$.
\end{lemma}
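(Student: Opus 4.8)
The plan is to squeeze $\NodeMeasureFn{\Node}$ between $\measure{\spaceOf{\Node}}$ from above and a lower bound that tends to the same value. The case $\measure{\spaceOf{\Node}} = 0$ is immediate, since $0 \le \NodeMeasureFn{\Node} \le \measure{\spaceOf{\Node}}$ (the upper bound holding because $\NodeNumFreeFn{\cdot} \le \NodeNumTotalFn{\cdot}$, exactly as in the first step of Lemma~\ref{lemma:infinite_samples}), so I would focus on $\measure{\spaceOf{\Node}} > 0$. Because $\Node$ is a free node, $\measure{\spaceOf{\Node} \Intersect \obsSpace} = 0$, so every uniform sample drawn inside $\spaceOf{\Node}$ is collision-free almost surely; consequently all leaf descendants of $\Node$ are themselves free nodes, and each positive-measure one splits on its first sample. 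I would then write $\NodeMeasureFn{\Node} = \sum_{\ell} r_\ell \, \measure{\spaceOf{\ell}}$, summed over the current leaf descendants $\ell$ of $\Node$, with $r_\ell = \NodeNumFreeFn{\ell}/\NodeNumTotalFn{\ell}$ and $\sum_\ell \measure{\spaceOf{\ell}} = \measure{\spaceOf{\Node}}$. This yields the two-sided bound $(\min_\ell r_\ell)\, \measure{\spaceOf{\Node}} \le \NodeMeasureFn{\Node} \le \measure{\spaceOf{\Node}}$, so it suffices to show $\min_\ell r_\ell \to 1$ almost surely.

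The heart of the argument is to control the deficit $\NodeNumTotalFn{\ell} - \NodeNumFreeFn{\ell}$ of each leaf. Because samples inside $\Node$ are collision-free almost surely, a successful sample increments $\NumFree$ and $\NumTotal$ together and leaves the deficit unchanged, whereas the inheritance step (lines 11, 13) scales both counts, and hence the deficit, by $w = \measure{\spaceOf{\text{child}}}/\measure{\spaceOf{\text{parent}}}$. Telescoping along the path from $\Node$ to a descendant $\ell$ at relative depth $k$, the deficit therefore equals $(\measure{\spaceOf{\ell}}/\measure{\spaceOf{\Node}})\,D$, where $D$ is the fixed finite deficit inherited by $\Node$ itself. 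Tracking the companion quantity $a_k := \measure{\spaceOf{\ell}}/\NodeNumTotalFn{\ell}$ for the depth-$k$ node on this path under the same recursion gives $a_k = a_{k-1}\bigl(1 - 1/(N_{k-1}+1)\bigr)$, where $N_{k-1}$ is that node's inherited total; since $N$ grows by at most one per level, this telescopes to $a_k \le \measure{\spaceOf{\Node}}/(\NodeNumTotalFn{\Node}+k)$. Combining the two, $1 - r_\ell = (\NodeNumTotalFn{\ell}-\NodeNumFreeFn{\ell})/\NodeNumTotalFn{\ell} = (D/\measure{\spaceOf{\Node}})\,a_{k} \le D/k$, i.e. a leaf's success ratio approaches $1$ at a rate governed purely by its depth below $\Node$.

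Finally I would invoke Lemma~\ref{lemma:infinite_samples}: every positive-measure node is sampled infinitely often and, being free, splits, so for any fixed $k$ only finitely many nodes occupy relative depth $\le k$ and each of them almost surely eventually becomes interior. Hence the minimum depth over the positive-measure leaf descendants of $\Node$ tends to infinity almost surely, which forces $\min_\ell r_\ell \ge 1 - D/k \to 1$ and completes the squeeze. Measure-zero leaves contribute nothing to either side of the bound and are set aside.

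I expect the main obstacle to be the bookkeeping in the middle step: verifying precisely how the inheritance rule transports both the deficit and the ratio $\measure{\spaceOf{\ell}}/\NodeNumTotalFn{\ell}$ down the tree, and confirming that the depth-only estimate $1 - r_\ell \le D/k$ holds uniformly over \emph{all} leaves rather than merely along one branch. A secondary subtlety is the interface with Lemma~\ref{lemma:infinite_samples}: one must argue that every leaf simultaneously descends to arbitrary depth, not just that each individually fixed node is eventually split, and dispose of the measure-zero boundary leaves that Lemma~\ref{lemma:infinite_samples} does not directly cover.
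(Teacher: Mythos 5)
Your proof is correct, and its skeleton is the same as the paper's: both arguments decompose $\NodeMeasureFn{\Node}$ into the sum of $\NodeMeasureFn{\ell}$ over the current leaf descendants $\ell$ of $\Node$, both use the fact that in a free subtree every sample is almost surely collision-free (so every positive-measure leaf splits on its first sample), and both invoke Lemma~\ref{lemma:infinite_samples} to force splitting to continue forever. Your central observation---that a collision-free sample leaves the deficit $\NodeNumTotalFn{\ell}-\NodeNumFreeFn{\ell}$ unchanged while inheritance scales it by $w$---is exactly the content of the paper's unrolled recurrences for $\NumFree$ and $\NumSamples$, which coincide term by term except in their initial conditions. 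Where your write-up genuinely departs from (and tightens) the paper is in how the limit is assembled. First, your telescoped bound $1-r_\ell \le D/k$ (with $r_\ell = \NodeNumFreeFn{\ell}/\NodeNumTotalFn{\ell}$, $D$ the fixed deficit of $\Node$, and $k$ the relative depth of $\ell$) is deterministic and uniform over all positive-measure leaves; the paper instead takes a term-wise limit and, to kill the differing initial terms, asserts without proof that $\measure{\spaceOf{\Node_{\deep+k}}}/\measure{\spaceOf{\Node_{\deep}}} \rightarrow 0$ along infinite paths---a claim that is only almost surely true and, as your bound shows, not actually needed, since $\sum_{j<k} 1/\measure{\spaceOf{\Node_{\deep+j}}} \ge k/\measure{\spaceOf{\Node_{\deep}}} \rightarrow \infty$ already suffices. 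Second, the paper finishes by summing the term-wise limits over the members of $\freeDescendents$, glossing over the fact that this index set grows with $n$; your squeeze via $\min_\ell r_\ell \ge 1 - D/k_{\min}$, combined with the argument that the minimum relative depth $k_{\min}$ of positive-measure leaves tends to infinity almost surely (finitely many nodes can ever occupy depths $\le k$, and each almost surely splits), handles precisely this exchange-of-limits issue that you correctly flagged as the interface subtlety with Lemma~\ref{lemma:infinite_samples}. In short: same approach in substance, but your rendering is quantitatively sharper and closes two small gaps in the paper's own proof.
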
 
\begin{proof} 
There are two cases, one for when $\measure{\spaceOf{\Node}} = 0$ and another for when $\measure{\spaceOf{\Node}} > 0$. The former is immediate given the definition of $\NodeMeasureFn{\Node}$, and so we focus on the latter. When a new free node $\Node_\deep \in \freeNodeSet$ is created at depth $\deep > 1$ of the tree it initializes $\NodeNumFreeFn{\Node_\deep} > 0$ and  $ \NodeNumTotalFn{\Node_\deep} > 0$ based on similar values contained in its parent (and wighted by the relative measures of $\Node_\deep$ vs. its parent). By Lemma~\ref{lemma:infinite_samples} we know that $\Node_\deep$ will almost surely generate two children $\Node_{\deep+1, 0}$ and $\Node_{\deep+1, 1}$. By construction (lines 11-13), they will be initialized with ${\Node_{\deep+1, j}.\NumFree = (\Node_\deep.\NumFree + 1) \frac{\measure{\spaceOf{\Node_{\deep+1, j}}}}{\measure{\spaceOf{\Node_\deep}}}}$ and ${\Node_{\deep+1, j}.\NumSamples = (\Node_\deep.\NumSamples + 1) \frac{\measure{\spaceOf{\Node_{\deep+1, j}}}}{\measure{\spaceOf{\Node_\deep}}}}$, for ${j \in \{0 , 1 \}}$. These children will also generate their own children almost surely, etc. Because $\Node_\deep$ is a free node, all samples from its sub-tree will result in more {\it free node} descendants being created almost surely. Let $\freeDescendents$ be the set containing all leaf node descendants of $\Node_\deep$ at iteration $n$. By construction (line 24), as soon as $| \freeDescendents | \geq 1$, then  
${\NodeMeasureFn{\Node_\deep} = 
   \sum_{\Node \in \freeDescendents} \NodeMeasureFn{\Node} }$.  
We now examine a single term of the latter summation, i.e., the term for node $\Node_{\deep+k}$ at depth ${\deep + k}$. In particular.
${\Node_{\deep+k}.\distMass = \frac{\Node_{\deep+k}.\NumFree}{\Node_{\deep+k}.\NumSamples} \measure{\spaceOf{\Node_{\deep+k}}}}$.
For the remainder of this proof we will abuse our notation and let ${\|\cdot\| = \measure{\spaceOf{\cdot}}}$ to make the following equations more readable.
Unrolling the recurrence relation for $\Node_{\deep+k}.\NumFree$ gives:
{\small
\begin{align*}
&\Node_{\deep+k}.\NumFree = \\ 
& {\frac{\|\Node_{\deep+k}\|}{\|\Node_{\deep+k-1}\|}\!\!\left(\ldots \frac{\|\Node_{\deep+2}\|}{\|\Node_{\deep+1}\|}\left(\frac{\|\Node_{\deep+1}\|}{\|\Node_{\deep}\|}\left(\Node_{\deep}.\NumFree + 1\right) + 1 \!\!\right) \ldots + 1 \right)}
\end{align*}
}
where $\Node_{\deep+k-1}, \ldots, \Node_{\deep+2}, \Node_{\deep+1}, \Node_{\deep}$, are the ancestors of $\Node_{\deep+k}$ going up the tree to 
$\Node_{\deep}$.
This can be rearranged:
{
\small
\begin{align*}
\Node_{\deep+k}&.\NumFree = \\
&\frac{\|\Node_{\deep+k}\|}{\|\Node_{\deep}\|}\Node_{\deep}.\NumFree + \frac{\|\Node_{\deep+k}\|}{\|\Node_{\deep}\|} + \frac{\|\Node_{\deep+k}\|}{\|\Node_{\deep+1}\|} + \ldots +  \frac{\|\Node_{\deep+k}\|}{\|\Node_{\deep+k-1}\|}
\end{align*}
}%
Similarly, the $\Node_{\deep+k}.\NumSamples$ recurrence relation is:
{
\small
\begin{align*}
\Node_{\deep+k}&.\NumSamples = \\
&\frac{\|\Node_{\deep+k}\|}{\|\Node_{\deep}\|}\Node_{\deep}.\NumSamples + \frac{\|\Node_{\deep+k}\|}{\|\Node_{\deep}\|} + \frac{\|\Node_{\deep+k}\|}{\|\Node_{\deep+1}\|} + \ldots +  \frac{\|\Node_{\deep+k}\|}{\|\Node_{\deep+k-1}\|}
\end{align*}
}%
${\lim_{k \rightarrow \infty} \frac{\|\Node_{\deep+k}\|}{\|\Node_{\deep}\|} = 0}$, also ${\probability{\frac{\|\Node_{\deep+k}\|}{\|\Node_{\deep+k-1}\|} = 0} = 0}$ given ${\measure{\spaceOf{\Node_{\deep}}} > 0}$, where we resume our normal notation. 
Thus, ${\probability{\lim_{k \rightarrow \infty} \Node_{\deep+k}.\distMass  = \measure{\spaceOf{\Node_{\deep+k}}}} = 1}$.

Lemma~\ref{lemma:infinite_samples} guarantees that ${\probability{{\lim_{n\rightarrow \infty} k = \infty}} = 1}$ for all $\Node_{\deep+k}$ such that ${\freeDescendents \owns \Node_{\deep+k}}$.
Thus, by summing over the members of $\freeDescendents$ we get:
${\probability{{\lim_{n \rightarrow \infty} 
\sum_{\Node \in \freeDescendents} \Node.\distMass = 
\sum_{\Node \in \freeDescendents} \measure{\spaceOf{\Node}}}
} = 1}$. 
${\Node_\deep.\distMass = \sum_{\Node \in \freeDescendents} \Node.\distMass}$ by definition. Also by definition ${\spaceOf{\Node_\deep} = \bigcup_{\Node \in \freeDescendents} \spaceOf{\Node}}$
and $\Node_i \cap \Node_j = \emptyset$ for all ${\Node_i, \Node_j \in \freeDescendents}$ such that ${i\neq j}$; therefore,
${\measure{\spaceOf{\Node_\deep}} = \sum_{\Node \in \freeDescendents} \measure{\spaceOf{\Node}}}$.
Substitution finishes the proof.
\end{proof}

Note, Corollary~\ref{corollary:a} depends on Lemma~\ref{lemma:free_nodes_weight} and Corollary~\ref{corollary:b}:

\begin{corollary} \label{corollary:a}
\verb| |\\
$\probability{\lim_{n \rightarrow \infty} \sum_{\node \in \freeLeafNodeSet} 
\NodeMeasureFn{\Node} = \measure{\freeSpace}} = 1$.
\end{corollary}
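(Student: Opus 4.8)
The plan is to squeeze the random quantity $\sum_{\node \in \freeLeafNodeSet} \NodeMeasureFn{\Node}$ between a deterministic upper bound and an asymptotically matching lower bound. For the upper bound I would recall from the proof of Lemma~\ref{lemma:infinite_samples} that $\NodeMeasureFn{\Node} \leq \measure{\spaceOf{\Node}}$ for every leaf node, and that the free leaf nodes are pairwise disjoint with $\measure{\spaceOf{\Node}} = \measure{\spaceOf{\Node} \Intersect \freeSpace}$ (since a free node meets $\obsSpace$ in a null set). Summing therefore gives $\sum_{\node \in \freeLeafNodeSet} \NodeMeasureFn{\Node} \leq \sum_{\node \in \freeLeafNodeSet} \measure{\spaceOf{\Node}} \leq \measure{\freeSpace}$ at every stage of the algorithm. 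The entire content of the corollary is thus the reverse inequality in the limit, and the main obstacle is that one cannot simply pass the limit inside the sum: the index set $\freeLeafNodeSet$ itself grows with $n$, so Lemma~\ref{lemma:free_nodes_weight}, which controls one node at a time, does not apply termwise to a moving target.

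To obtain the lower bound I would \emph{freeze} the tree at a finite time. Fix $\epsilon > 0$. By Corollary~\ref{corollary:c} (a consequence of Corollary~\ref{corollary:b}) the free-leaf measures cover the free space in the limit, so almost surely there is a finite time $N_1$ at which the free leaf nodes $\freeLeafNodeSet^{(N_1)} = \{\Node_1, \ldots, \Node_m\}$ already satisfy $\sum_{i=1}^m \measure{\spaceOf{\Node_i}} > \measure{\freeSpace} - \epsilon/2$. This is a \emph{finite} collection of free nodes, and I would hold it fixed for the rest of the argument. The key structural fact (noted in Fig.~\ref{fig:treeCD}) is that descendants of a free node are almost surely free; hence at every later time $n \geq N_1$ each $\Node_i$ is either still a leaf or has split into free descendants whose leaves all lie in $\freeLeafNodeSet^{(n)}$. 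By construction the mass of the now-interior node $\Node_i$ equals the sum of the masses of its leaf descendants, and the $m$ subtrees are disjoint, so $\sum_{\node \in \freeLeafNodeSet^{(n)}} \NodeMeasureFn{\Node} \geq \sum_{i=1}^m \NodeMeasureFn{\Node_i}$, where each $\NodeMeasureFn{\Node_i}$ is evaluated at time $n$.

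At this point the interchange difficulty has disappeared, since the right-hand sum ranges over only $m$ fixed nodes. Lemma~\ref{lemma:free_nodes_weight} gives $\NodeMeasureFn{\Node_i} \to \measure{\spaceOf{\Node_i}}$ almost surely for each $i$, and a finite sum of almost-sure limits converges, so almost surely there is $N_2 \geq N_1$ with $\sum_{i=1}^m \NodeMeasureFn{\Node_i} > \sum_{i=1}^m \measure{\spaceOf{\Node_i}} - \epsilon/2 > \measure{\freeSpace} - \epsilon$ for all $n \geq N_2$. Combined with the upper bound this yields $\measure{\freeSpace} \geq \sum_{\node \in \freeLeafNodeSet} \NodeMeasureFn{\Node} > \measure{\freeSpace} - \epsilon$ eventually. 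Finally I would let $\epsilon$ run through the sequence $1/k$ and intersect the corresponding almost-sure events; since only countably many nodes are ever created, the event that Lemma~\ref{lemma:free_nodes_weight} holds simultaneously for all free nodes is itself almost sure, so the countable intersection is almost sure and the stated limit holds with probability one. The crux, as anticipated, is the limit/sum interchange, which the freezing trick sidesteps by reducing each $\epsilon$-stage to a finite sum.
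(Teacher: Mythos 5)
Your proof is correct and uses exactly the ingredients the paper intends: the paper states this corollary without proof, noting only that it ``depends on Lemma~\ref{lemma:free_nodes_weight} and Corollary~\ref{corollary:b}.'' Your freezing argument rigorously supplies the one nontrivial step that the paper's bare assertion glosses over---Lemma~\ref{lemma:free_nodes_weight} controls one fixed node at a time while the index set $\freeLeafNodeSet$ grows with $n$---by reducing each $\epsilon$-stage to a fixed finite family of free nodes (using that descendants of free nodes are almost surely free, and that an interior node's mass equals the sum of its leaf descendants' masses), whose subtree masses are lower bounds for the full sum.
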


% lemma: case 2
\begin{lemma} \label{lemma:case_3}

${\probability{\lim_{n \rightarrow \infty}\probDensity{n}{x} = c} = 1}$ for all $x$ and $B$  such that ${x \in B \subset \freeSpace}$ and $\measure{B} > 0$

\end{lemma}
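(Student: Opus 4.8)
The plan is to express the density $\probDensity{n}{x}$ explicitly in terms of the leaf node containing $x$ and then show that each of its factors converges almost surely. Because the sampling within any leaf node $\node$ is uniform over $\spaceOf{\node}$ (line 16 of Algorithm~\ref{algo:sample}), for $x \in \spaceOf{\node}$ the density equals the probability mass assigned to that leaf divided by its measure. Combining this observation with Proposition~\ref{lemma:induced_prob}, I would write
$$\probDensity{n}{x} = \frac{\NodeMeasureFn{\node_n}}{\measure{\spaceOf{\node_n}} \, \sum_{\Node' \in \leafNodeSet} \NodeMeasureFn{\Node'}},$$
where $\node_n$ denotes the unique leaf node containing $x$ when the tree holds $n$ points. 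The goal is then to show that the numerator ratio $\NodeMeasureFn{\node_n}/\measure{\spaceOf{\node_n}}$ converges to $1$ and that the denominator sum converges to $\measure{\freeSpace}$, each almost surely, so that $\probDensity{n}{x} \to 1/\measure{\freeSpace}$, a constant $c$ independent of $x$.

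For the numerator, since $x \in B \subset \freeSpace$ with $\measure{B} > 0$, Lemma~\ref{lemma:measure_zero} guarantees that almost surely there is eventually a free node whose closure contains $x$. Because every descendant of a free node is itself a free node (and node boundaries have measure zero), the leaf $\node_n \owns x$ is almost surely a free node for all sufficiently large $n$. Applying Lemma~\ref{lemma:free_nodes_weight} to this free leaf then yields $\NodeMeasureFn{\node_n} \to \measure{\spaceOf{\node_n}}$ almost surely, hence $\NodeMeasureFn{\node_n}/\measure{\spaceOf{\node_n}} \to 1$.

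For the denominator, write $M_n := \sum_{\Node' \in \leafNodeSet} \NodeMeasureFn{\Node'}$ for the total mass and $M_n^{\free} := \sum_{\Node' \in \freeLeafNodeSet} \NodeMeasureFn{\Node'}$ for the free-leaf mass. Corollary~\ref{corollary:a} gives $M_n^{\free} \to \measure{\freeSpace}$ almost surely. Meanwhile, by Proposition~\ref{lemma:induced_prob} the ratio $M_n^{\free}/M_n = \probability{\SampledPoint_n \in \spaceOf{\freeLeafNodeSet}}$, and Corollary~\ref{corollary:no_sample_obs_or_mixed} shows this tends to $1$ since the probability of sampling from obstacle or mixed leaves vanishes. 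Multiplying the two limits, $M_n = M_n^{\free}\,(M_n/M_n^{\free}) \to \measure{\freeSpace} \cdot 1 = \measure{\freeSpace}$ almost surely. As the intersection of countably many probability-one events again has probability one, all of the above hold simultaneously almost surely, and substituting into the displayed expression for $\probDensity{n}{x}$ completes the argument.

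The step I expect to be the main obstacle is justifying rigorously that the leaf containing $x$ truly \emph{becomes and remains} a free node: Lemma~\ref{lemma:measure_zero} only asserts the existence of a free node whose closure contains $x$, so I must use the measure-zero boundary assumption to argue that, for almost all $x \in B$, the point $x$ lies in the interior of such a node and therefore that $\node_n$ is free for all large $n$. The convergence of the denominator is a secondary subtlety, since it combines infinitely many almost-sure statements, but this is handled routinely because a countable intersection of probability-one events retains probability one.
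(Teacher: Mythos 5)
Your proposal is correct and takes essentially the same approach as the paper's proof: both factor the density at $x$ as the containing leaf's mass divided by (its measure times the total leaf mass), using Proposition~\ref{lemma:induced_prob} together with uniform within-leaf sampling, then drive the leaf ratio to $1$ via Lemma~\ref{lemma:free_nodes_weight} and the normalizing sum to $\measure{\freeSpace}$ via Corollary~\ref{corollary:a}. If anything, you are more explicit than the paper, which leaves implicit both your denominator step (combining Corollary~\ref{corollary:a} with Corollary~\ref{corollary:no_sample_obs_or_mixed}) and your appeal to Lemma~\ref{lemma:measure_zero} to ensure that the leaf containing $x$ eventually becomes and remains a free node.
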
 
\begin{proof} 

By Proposition~\ref{lemma:induced_prob} and Lemmas \ref{lemma:free_nodes_weight} and Corollary~\ref{corollary:a} we know that ${\lim_{n \rightarrow \infty} \probability{\Point_n \in \spaceOf{\Node}} = \frac{\measure{\spaceOf{\Node}}}{ \measure{\freeSpace}  }}$ for all free nodes  ${\Node \in \freeNodeSet}$ almost surely. By construction (\mbox{line 2}) once a leaf node ${\Node \in \freeLeafNodeSet}$ is reached, samples are drawn uniformly from within $\spaceOf{\Node}$.
Thus, the uniform probability density of drawing ${\Point_n \in \spaceOf{\Node}}$, given that the algorithm has decided to draw from within $\spaceOf{\Node}$, is ${\probDensity{n}{\Point_n | \Point_n \in \spaceOf{\Node}} = \frac{1}{\measure{\spaceOf{\Node}}}}$.
Therefore, the posterior probability density $\lim_{n \rightarrow \infty}\probDensity{n}{\Point_n} =
 {\lim_{n \rightarrow \infty} \probDensity{n}{\Point_n | \Point_n \! \in \!\spaceOf{\Node}} \probability{\Point_n \! \in \spaceOf{\Node}} \! = \! \frac{1}{\measure{\freeSpace}}}$ almost surely, 
which is constant and {\it independent} of $\Node \in \freeLeafNodeSet$, and thus holds almost everywhere in $\UnionOver{\Node \in \freeLeafNodeSet} \spaceOf{\Node}$---and thus almost everywhere in $\freeSpace$ (by Corollary~\ref{corollary:b}). 
\end{proof}

% theorem: final theorem
\begin{theorem} \label{theorem:sampling}
$\probability{\lim_{n \rightarrow \infty}\probDensity{n}{x} = \probUniformFree{x}} = 1$.
\end{theorem}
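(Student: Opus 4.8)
The plan is to reduce the theorem to the two regimes already prepared by the preceding lemmas, treating $x\in\freeSpace$ and $x\in\obsSpace$ separately and discarding the common boundary, which has measure zero by assumption. First I would pin down the target value: since $\probUniformFree{\cdot}$ is a probability density that is constant on $\freeSpace$ and vanishes on $\obsSpace$, the normalization $\int_{\freeSpace}\probUniformFree{x}\,dx = 1$ forces $\probUniformFree{x} = 1/\measure{\freeSpace}$ for $x\in\freeSpace$ and $\probUniformFree{x}=0$ for $x\in\obsSpace$. Thus the theorem amounts to showing $\probDensity{n}{x}\to 1/\measure{\freeSpace}$ almost surely at almost every free point and $\probDensity{n}{x}\to 0$ almost surely at almost every obstacle point.

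The free-space case is essentially delivered by Lemma~\ref{lemma:case_3}. For any $x$ in the interior of $\freeSpace$ I would pick a ball $B$ with $x\in B\subset\freeSpace$ and $\measure{B}>0$; Lemma~\ref{lemma:case_3} then gives $\probability{\lim_{n}\probDensity{n}{x}=c}=1$, and its proof identifies the constant as $c = 1/\measure{\freeSpace}$ (the product of $\lim_n\probability{\SampledPoint_n\in\spaceOf{\Node}} = \measure{\spaceOf{\Node}}/\measure{\freeSpace}$ with the within-leaf uniform density $1/\measure{\spaceOf{\Node}}$). Since this $c$ equals $\probUniformFree{x}$, the free-space half is done, and it already encodes the fact that the normalizing sum $\sum_{\Node'\in\leafNodeSet}\NodeMeasureFn{\Node'}$ tends to $\measure{\freeSpace}$ almost surely, via Corollary~\ref{corollary:a} and Corollary~\ref{corollary:b}.

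For the obstacle-space case I would fix $x$ in the interior of $\obsSpace$ and track the leaf $\Node_x$ containing it. Mirroring the subdivision argument of Lemma~\ref{lemma:measure_zero}, a mixed leaf containing $x$ almost surely keeps splitting (it has positive free measure, so by Lemma~\ref{lemma:infinite_samples} it is sampled infinitely often and eventually produces a free sample), so its hyper-rectangle shrinks across all $\Dimension$ axes. I would argue that almost surely some leaf $\Node^\ast\owns x$ eventually fits inside an obstacle ball $\epsBoxPoint\subset\obsSpace$, hence becomes a fixed obstacle leaf of positive measure that no longer splits. On that leaf the induced density is $\probDensity{n}{x} = \NodeMeasureFn{\Node^\ast}\big/\big(\measure{\spaceOf{\Node^\ast}}\sum_{\Node'}\NodeMeasureFn{\Node'}\big)$; by Lemma~\ref{lemma:zero_mass_obstacle} the numerator $\NodeMeasureFn{\Node^\ast}\to 0$ almost surely, while the denominator is bounded below by $\measure{\spaceOf{\Node^\ast}}\sum_{\Node'\in\freeLeafNodeSet}\NodeMeasureFn{\Node'}\to\measure{\spaceOf{\Node^\ast}}\measure{\freeSpace}>0$. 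Hence $\probDensity{n}{x}\to 0 = \probUniformFree{x}$ almost surely.

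Finally I would assemble the two cases: for each fixed $x$ outside the measure-zero boundary the relevant convergence holds with probability one, which is exactly the claim $\probability{\lim_n\probDensity{n}{x}=\probUniformFree{x}}=1$ for almost every $x\in\totalSpace$. The main obstacle I anticipate is the obstacle-space half: Lemma~\ref{lemma:case_1} only controls the obstacle \emph{mass} $\probability{\SampledPoint_n\in\obsSpace}$ and does so in probability rather than almost surely, so it does not by itself yield pointwise almost-sure convergence of the density. Bridging this gap requires the ``eventually a fixed obstacle leaf of positive measure'' argument above, together with care that the normalizing sum stays bounded away from zero---i.e.\ that the free-leaf mass $\sum_{\Node'\in\freeLeafNodeSet}\NodeMeasureFn{\Node'}$ does not itself degenerate---so that a vanishing numerator genuinely forces a vanishing density.
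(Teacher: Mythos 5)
Your free-space half is exactly the paper's route: the paper proves the theorem simply by ``combining Lemma~\ref{lemma:case_1} and Lemma~\ref{lemma:case_3}'', and your use of Lemma~\ref{lemma:case_3} with the constant identified as $c = 1/\measure{\freeSpace} = \probUniformFree{x}$ (via Corollaries~\ref{corollary:a} and \ref{corollary:b}) matches it. You are also right to be uneasy about the obstacle half: the paper disposes of $x \in \obsSpace$ by citing only Lemma~\ref{lemma:case_1}, which controls the sampling \emph{mass} $\probability{\SampledPoint_n \in \obsSpace}$ as a limit of probabilities, not the almost-sure pointwise limit of the density $\probDensity{n}{x}$ on $\obsSpace$; your diagnosis of that mismatch is more careful than the paper's own two-line proof.

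However, your substitute argument for the obstacle case has a genuine gap: the claim that the leaf containing an obstacle-interior point $x$ almost surely ``fits inside an obstacle ball $\epsBoxPoint \subset \obsSpace$, hence becomes a fixed obstacle leaf'' is unjustified, and it is false in general. The subdivision argument of Lemma~\ref{lemma:measure_zero} cannot be mirrored inside an obstacle, because the algorithm splits a leaf only at a \emph{collision-free} sample point; no sample landing in $\epsBoxPoint$ can ever become a splitting point, so the very event that drives the proof of Lemma~\ref{lemma:measure_zero} has probability zero here. Concretely, take $d=1$ with obstacle $[a,b]$ and $x \in (a,b)$: every splitting point is a free sample lying outside $[a,b]$, so by induction the leaf containing $x$ always contains all of $[a,b]$ together with a strip of free space next to it --- it remains a mixed leaf forever and never becomes an obstacle node. (In $d \ge 2$ a splitting hyperplane through a free point \emph{can} cut the obstacle, but you give no argument that this almost surely isolates $x$, and Lemma~\ref{lemma:measure_zero} does not supply one.) What your argument actually needs is a statement about the \emph{moving} sequence of leaves containing $x$: writing $\Node_x$ for the current such leaf, Proposition~\ref{lemma:induced_prob} and the leaf-node definition of $\NodeMeasure$ give $\probDensity{n}{x} = \bigl(\NodeNumFreeFn{\Node_x}/\NodeNumTotalFn{\Node_x}\bigr) \big/ \sum_{\Node' \in \leafNodeSet}\NodeMeasureFn{\Node'}$, and since your lower bound on the denominator is sound, the missing ingredient is that the free-fraction estimate $\NodeNumFreeFn{\Node_x}/\NodeNumTotalFn{\Node_x}$ of \emph{whichever} leaf currently contains $x$ tends to $0$ almost surely. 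Lemmas~\ref{lemma:zero_mass_obstacle} and \ref{lemma:zero_mass_mixed} assert this only for a fixed node that remains a leaf, so they cannot be invoked directly; bridging the fixed-node statements to the moving leaf at $x$ is precisely the step your proposal (and, arguably, the paper itself) leaves open.
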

\begin{proof}
This is proved by combining Lemmas~\ref{lemma:case_1} and \ref{lemma:case_3}.
\end{proof}

\section{Summary}

This document contains improved and updated proofs of convergence for the sampling method presented in our paper ``Free-configuration Biased Sampling for Motion Planning'' \cite{bialkowski_IROS13}.

\bibliographystyle{plain}
\bibliography{publications}

\end{document}